\documentclass{article}

 \usepackage[preprint]{neurips_2026}
 \usepackage{wrapfig}

\usepackage[utf8]{inputenc} 
\usepackage[T1]{fontenc}    
\usepackage[hidelinks]{hyperref}       
\usepackage{url}            
\usepackage{booktabs}       
\usepackage{amsfonts}       
\usepackage{nicefrac}       
\usepackage{microtype}      
\usepackage{xcolor}         
\usepackage{amsmath, amssymb, amsthm}
\newtheorem{theorem}{Theorem}
\newtheorem{lemma}{Lemma}
\newtheorem{definition}{Definition}

\usepackage{algorithm}
\usepackage[noend]{algpseudocode}
\usepackage{enumitem}
\usepackage{graphicx}

\newtheorem{corollary}[theorem]{Corollary}

\title{
Dynamic estimation of slowly varying sequences
}

\author{
  Prashant Gokhale \\
  University of Wisconsin--Madison \\
  \texttt{prashant.gokhale@wisc.edu} \\
  \And
  Mikhail Khodak \\
  University of Wisconsin--Madison \\
  \texttt{khodak@wisc.edu} \\
  \And
  Sandeep Silwal \\
  University of Wisconsin--Madison \\
  \texttt{silwal@wisc.edu}
}

\begin{document}

\maketitle

\begin{abstract}

We consider the problem of sequentially approximating functions of each element in a slowly-varying sequence, i.e. one where the magnitude $\alpha_i$ of the difference between the elements at positions $i$ and $i-1$ is small.
Recent work on implicit trace estimation shows that when $\alpha_t$ is small, reusing queries to past sequence elements can reduce the overall cost [Dharangutte \& Musco, NeurIPS~2021; Woodruff et al., NeurIPS~2022].
We introduce a framework generalizing this to a variety of linear and nonlinear functions on diverse vector spaces, obtaining novel sequential estimation results for matrix powers, spectral densities, Monte Carlo integration, and a boundary value problem from partial differential equations~(PDEs).
Furthermore, we develop a novel algorithm for use with this framework that locally scales the estimation budget with $\alpha_t$, obtaining  sharper path-length-style variation bounds of form $\mathcal O(\sum_{i=1}^m\alpha_i)$ on the cost of estimating a sequence of length $m$.
This improves upon the previous implicit trace estimation bound of $\mathcal O(m\cdot\max_i\alpha_i)$ [Dharangutte \& Musco, NeurIPS~2021], which is achieved by fixing the query budget using the worst-case $\alpha_i$ and is thus inefficient for stable sequences with rare bursts.
Lastly, while all past work assumes a known bound on $\alpha_i$, we show in certain cases how  the changes can be estimated on-the-fly with (nearly) no added cost.
In summary, our framework makes the sequential approximation toolkit general-purpose and adaptive while improving upon state-of-the-art-guarantees for dynamic trace estimation.\looseness-1


\end{abstract}

\section{Introduction}
\label{sec:introduction}

In many areas of theory and practice, we face a slowly evolving system and wish to estimate its properties given only limited, structured access to its state. 
Examples of this problem include estimating evolving probability distributions given sample access~\citep{mazzetto2023nonparametric,mazzetto2024improved}, tracking metrics on evolving social networks given only local subgraphs~\cite{estrada, graphconn, graph1, graph2, graph3}, maintaining statistics of a drifting data stream~\cite{stat1, stat2, stat3}, or tracking the loss curvature during neural network optimization~\cite{losslandscape, yao2020pyhessian, pmlr-v97-ghorbani19b}.

We propose a general framework for solving such estimation problems when the underlying sequence consists of $m$ elements of a vector space and the distance between consecutive elements is bounded by $\alpha_t$.
This setting is inspired by past work on estimating traces given only matrix-vector product queries and a  bound on the maximum distance between them~\cite{dynamictrace,woodruff2022optimal}; 
our framework enables us to find direct applicability to other vector spaces (e.g. $L^\infty$ functions), derive results for certain nonlinear maps (e.g. eigenvalue distributions), develop a novel algorithm whose cost scales with $\sum_{i=1}^m\alpha_i$ (instead of prior works' looser bound of $m\cdot\max_i\alpha_i$), and design a lightweight mechanism that estimates $\alpha_i$ on-the-fly (instead of assuming a known global bound).
To achieve this, we require only a linear estimator that has sub-exponential concentration around its target property for every element of the vector space, with concentration parameters that scale with the element's norm.
Such estimators exist in multiple settings and allow us to show results for both linear and nonlinear properties, as evidenced in our contributions:\looseness-1



\begin{enumerate}[leftmargin=*]
\item \textbf{A general-purpose and adaptive framework:} 
We introduce a flexible meta-algorithm for sequential stochastic approximation that accepts a well-concentrated {\em static} estimator and dynamically adjusts the number of queries based on the local change $\alpha_i$;
we also show how the latter can sometimes be adaptively estimated.
Our approach's query complexity scales with the sum $\sum_{i=1}^m\alpha_i$ of local changes rather than with the worst-case change $m\cdot\max_i\alpha_i$, resulting in sequential trace estimation bounds that are both sharper and more adaptive than past works'~\citep{dynamictrace}.
\item \textbf{Diverse applications:} A strength of our framework is its broad applicability, which we demonstrate via novel theoretical results for sequential estimation of matrix powers, spectral densities, function integrals, and solutions to boundary value problems (see Table~\ref{tab:framework_mapping}). 

\item \textbf{Empirical efficiency:} 
Our algorithm is simple to implement and requires significantly fewer samples than prior methods for dynamic trace estimation.
We demonstrate this with experiments on synthetic matrix sequences and Hessians from neural network optimization trajectories.
Our method is simple to implement, and we provide code for our experiments that should be easy to adapt to other estimators.\looseness-1
\end{enumerate}


\textbf{Related Work.} A representative problem in this setting is trace estimation in the implicit matrix model. Here, we have \emph{matrix-vector product} (MVP) access to a sequence of matrices $A_1, \cdots, A_m$ (for simplicity, assume $\|A_i\|_F \le 1$ by scaling), and we wish to maintain an approximation of $\text{tr}(A_i)$ up to additive $\epsilon$ error every step while minimizing the total number of MVPs. This problem has received a lot of recent attention due to its importance in machine learning and data science, such as when analyzing moments of the Hessian spectral density during neural network optimization~\cite{yao2020pyhessian,pmlr-v97-ghorbani19b}, counting triangles in massive, dynamically changing graphs~\cite{avron2010counting, estrada}, while training Gaussian processes~\cite{gaussian1, gaussian2}, and so on. Here, the assumption  that consecutive matrices differ by a small amount is natural: the Hessian of a neural network training loss evolves slowly, a graph receives a few edge updates per round, a covariance matrix is updated via rank-one modifications, etc. In such smoothly evolving settings, naively applying a static estimator, such as Hutchinson’s estimator for traces, at each step leads to an undesired multiplicative $O(m)$ overhead. Recent work has shown how to dramatically improve on this~\cite{dynamictrace,woodruff2022optimal}, obtaining query complexity that scales with the maximum of the largest local change $m\cdot\max_i\alpha_i$ for $\alpha_i := \|A_i - A_{i-1}\|$.
We obtain query complexity scaling with $\sum \alpha_i$, a natural measure of how `varying' the sequence is similar to path-length-style bounds in online learning~\citep{auer2002exp3,zinkevich2003oco}, and show how their approach can be broadly generalized to other sequential estimation problems.

\section{General Adaptive Framework}
\label{sec:genframework}


We first set up the general problem, define notation used throughout the paper, and specify our assumptions regarding the static estimator.

\textbf{Vector Spaces and the Linear Map.} Let $V$ and $W$ be normed vector spaces equipped with norms $\|\cdot\|_V$ and $\|\cdot\|_W$, respectively. We consider a linear map $L: V \to W$, which represents the quantity of interest that we wish to estimate.

\textbf{The Dynamic System.} Let $v_1, v_2, \dots, v_m \in V$ be a sequence of elements representing the evolving state of the underlying system across $m$ time steps. We assume that the states are bounded such that $\|v_t\|_V \le 1$ for all $t \in [m]$. Furthermore, we assume that the system changes relatively slowly between consecutive steps. To quantify this, we define the local step size $\alpha_t$ as the distance between consecutive states:
\begin{equation}
    \alpha_t = \|v_t - v_{t-1}\|_V \quad \text{for } t=2, \dots, m.
\end{equation}

\textbf{Static Estimator.} 
We assume the existence of a static randomized estimator satisfying a sub-exponential concentration property, specifically that its errors are random variables $X$ such that 
\[
    \mathbb{E}\!\left[e^{\lambda (X-\mu)}\right] \le e^{\lambda^2\nu^2/2}
    \quad \text{for all } |\lambda| < 1/\beta.
\]
for some mean $\mu$, variance $\nu^2 \ge 0$, and scale $\beta \ge 0$.
This is commonly satisfied for tasks like trace estimation and Monte Carlo integration (among other applications) and can be extended to handle vector-valued estimators using a norm-based convention~(c.f. Appendix~\ref{sec:subexp_preliminaries}).
Crucially, our framework relies on the concentration parameters $\nu$ and $\beta$ scaling with the sample (or query) budget $k$, which also often arises because most estimators are averages of independent trials.
This brings us to the following definition:\looseness-1

\begin{definition}[Well-Concentrated Estimator]
\label{def:well_concentrated}
We say that a randomized estimator $\mathcal{E}(v, k)$ is a well-concentrated estimator for the linear map $L: V \to W$ with respect to the norm $\|\cdot\|_V$ if there exist constants $c_1,c_2,c_3 \ge 0$ such that, for any state $v \in V$ and sample budget $k \ge 1$, the estimator is unbiased, i.e.
\[
    \mathbb{E}[\mathcal{E}(v,k)] = L(v),
\]
and the error $e=\mathcal{E}(v,k)-L(v)$ has sub-exponential norm concentration in the sense of Appendix~\ref{sec:subexp_preliminaries}, with parameters satisfying
\[
    \nu \le \frac{c_1\|v\|_V}{\sqrt{k}}
    \quad \text{and} \quad
    \beta \le \frac{c_2\|v\|_V+c_3}{k}.
\]
\end{definition}

With this, we can now formally state the algorithmic problem we consider in this paper:
\begin{definition}[The Dynamic Estimation Problem]\label{def:dynamic_estimation}
    Given an evolving sequence of states $v_1, v_2, \dots, v_m \in V$ and access to a well-concentrated estimator $\mathcal{E}$, dynamically maintain a sequence of running estimates $\tilde{L}_1, \tilde{L}_2, \dots, \tilde{L}_m \in W$. The goal is to minimize the total cumulative sample complexity required to guarantee $\|\tilde{L}_t - L(v_t)\|_W \le \epsilon$ with probability $1-\delta$ simultaneously across all steps.
\end{definition}

\begin{table}[!t]
    \centering
    \caption{Mapping of framework parameters to their counterparts in each dynamic application. Note that for trace of matrix powers and spectral density, while the maps are non-linear; we are still able to reduce it to trace estimation.}
    \label{tab:framework_mapping}
    \renewcommand{\arraystretch}{1.4}
    {\small
    \begin{tabular}{@{}l l l l l@{}}
        \toprule
        \textbf{Application} & \textbf{Underlying Object} & \textbf{Map} & \textbf{Norm} & \textbf{Estimator} \\
        \midrule
        Trace Estimation (\S~\ref{sec:trace_estimation}) & Matrix $A_t$ & $\text{tr}(A_t)$ & $\| \cdot \|_F$ & Hutchinson's \\
        \quad -- Matrix Powers (\S~\ref{sec:trace_powers}) & Matrix $A_t$ & $\text{tr}(A_t^k)$ & $\| \cdot \|_F$ & Hutchinson's \\
        \quad -- Spectral Density (\S~\ref{sec:spectral_density}) & Matrix $A_t$ & $\{\text{tr}(A_t^k)\}_{k=1}^K$ & $\| \cdot \|_F$ & Hutchinson's \\
        \addlinespace
        Monte Carlo Integration (\S~\ref{sec:monte_carlo}) & Function $f_t$ & $\mathbb{E}[f_t(x)]$ & $\| \cdot \|_\infty$ & MC sampling \\
        \quad -- Dirichlet Problem (\S~\ref{sec:dirichlet}) & Boundary Function $g_t$ & Interpolation $u_t(x)$ & $\| \cdot \|_\infty$ & Random Walk \\
        \bottomrule
    \end{tabular}}
\end{table}

\subsection{Known Step Sizes}
\label{sec:knownstepsize}

We now present our main contribution: an adaptive algorithm that efficiently tracks dynamic quantities when the local step sizes are known. The main theorem is the following.

\begin{theorem}[Dynamic Sample Complexity, known step sizes]
\label{thm:main,knownstep}
Assume $\|v_t\|_V \le 1$ for all $t$. Given a well-concentrated estimator $\mathcal{E}$ (Definition \ref{def:well_concentrated}), Algorithm \ref{alg:adaptive_algo} guarantees that with probability at least $1-\delta$, the estimates satisfy $\|\tilde{L}_t - L(v_t)\|_W \le \epsilon$ simultaneously for all $t \in [m]$. The total cumulative sample complexity is bounded by:
\[\mathcal{O}\left( \left(\frac{c_1^2}{\epsilon^2} + \frac{c_2}{\epsilon}\right) \log(m/\delta) \left( 1 + \sum_{t=2}^m \alpha_t \right) + \frac{c_3 m \log(m/\delta)}{\epsilon} \right)\]
\end{theorem}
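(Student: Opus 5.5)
We do not see Algorithm~\ref{alg:adaptive_algo} itself, so the plan starts by fixing the natural algorithm in the style of \citep{dynamictrace}, which we expect Algorithm~\ref{alg:adaptive_algo} to be. At $t=1$, set $\tilde L_1=\mathcal{E}(v_1,k_1)$. For $t\ge 2$, form the difference estimate
\[
\tilde L_t=\tilde L_{t-1}+\mathcal{E}(v_t-v_{t-1},k_t).
\]
Here we use linearity of $L$, and we take queries on $v_t-v_{t-1}$ to be implementable (by reusing the same random queries on $v_t$ and $v_{t-1}$). Then $\tilde L_t-L(v_t)=\sum_{s\le t}e_s$, where the $e_s$ are independent, mean zero, and satisfy the norm sub-exponential property with $\nu_s\le c_1\|u_s\|/\sqrt{k_s}$ and $\beta_s\le (c_2\|u_s\|+c_3)/k_s$. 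Here $u_1=v_1$ with $\|u_1\|\le1$, and $u_s=v_s-v_{s-1}$ with $\|u_s\|=\alpha_s$.

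\textbf{Step 1 (concentration of partial sums).} Additivity of sub-exponential parameters for independent sums (Appendix~\ref{sec:subexp_preliminaries}) gives that $S_t=\sum_{s\le t}e_s$ is sub-exponential with
\[
\nu^2(S_t)\le\sum_s\nu_s^2,\qquad \beta(S_t)\le\max_s\beta_s .
\]
The Bernstein tail then gives $\|S_t\|\le \sqrt{2\nu^2\log(2m/\delta)}+2\beta\log(2m/\delta)$ with probability $1-\delta/m$. A union bound over $t\in[m]$ finishes the correctness part. So it suffices to choose the $k_s$ so that $\sum_{s\le m}\nu_s^2\le \epsilon^2/(8\log(2m/\delta))$ and $\max_s\beta_s\le \epsilon/(4\log(2m/\delta))$.

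\textbf{Step 2 (budget allocation).} Choosing $k_s\propto\alpha_s$, not constant, is what gives the path-length bound. Write $A=1+\sum_{t\ge2}\alpha_t$ and $\ell=\log(2m/\delta)$, and set
\[
k_s=\Big\lceil \frac{8c_1^2\ell\,\alpha_s A}{\epsilon^2}+\frac{4(c_2\alpha_s+c_3)\ell}{\epsilon}\Big\rceil,
\]
with $\alpha_1:=1$. With this choice:
\begin{itemize}
\item the variance terms satisfy $\nu_s^2\le c_1^2\alpha_s^2/k_s\le \epsilon^2\alpha_s/(8\ell A)$, and these sum to at most $\epsilon^2/(8\ell)$;
\item the scale terms satisfy $\beta_s\le\epsilon/(4\ell)$;
\item the total cost is $\sum_s k_s\le m+\frac{8c_1^2\ell A^2}{\epsilon^2}+\frac{4c_2\ell A}{\epsilon}+\frac{4c_3 m\ell}{\epsilon}$.
\end{itemize}

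\textbf{Main obstacle.} The variance term above is $A^2$, not $A$. The reason is that a single chain accumulates the variance of all past increments, and this is presumably why the stated bound is linear in $A$. To fix it, I would add periodic restarts: whenever the accumulated path length since the last fresh estimate exceeds $1$, recompute $\tilde L_t=\mathcal{E}(v_t,k_0)$ with $k_0=\Theta((c_1^2/\epsilon^2+c_2/\epsilon+c_3/\epsilon)\ell)$. Within a block of path length at most $1$, the allocation above with $A$ replaced by the block length ($\le 2$) costs $O(c_1^2\ell/\epsilon^2)$ in variance terms, plus the linear terms $c_2\alpha_s\ell/\epsilon$ and $c_3\ell/\epsilon$ per step.

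The number of blocks is at most $1+\sum_t\alpha_t$, since each closed block consumes path length at least $1$; a single step with $\alpha_s>1$ simply triggers a restart, and its cost is absorbed because $\|v_s\|\le1$. The total is therefore
\[
O\big((c_1^2/\epsilon^2+c_2/\epsilon)\ell A+c_3 m\ell/\epsilon\big),
\]
with the $+m$ from the ceilings absorbed into the $c_3$ term or into $A$ under mild conventions. I expect the delicate part to be this bookkeeping: handling the ceilings and large jumps, and checking that the restart cost $k_0$ charged per block is covered by $A$. The per-step $c_3/\epsilon$ contributions are exactly the last term of the bound.
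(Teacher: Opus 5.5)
Your final scheme (plain differencing, with a fresh estimate whenever the accumulated path length would exceed $1$) is sound and achieves the stated bound. The intervals ending at restarts are disjoint and each carries path length greater than $1$, so there are at most $1+\sum_t\alpha_t$ fresh estimates. Inside a block, budgets proportional to $\alpha_s$ make the variance cost sum linearly. The $c_3$ part of the restart cost is absorbed because $\alpha_t\le 2$. The extra $+m$ from ceilings is a looseness that the paper's own summation also drops.

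However, this analyzes an algorithm of your own, not Algorithm~\ref{alg:adaptive_algo}, and your guess about that algorithm is off. It removes the variance-accumulation obstacle you identified by \emph{damping}, not by restarts. It sets $\gamma_t=\min(1,\alpha_t)$ and samples the residual $u_t=v_t-(1-\gamma_t)v_{t-1}$, whose norm is at most $2\alpha_t$ since $\|v_{t-1}\|_V\le 1$. It then updates $\tilde L_t=(1-\gamma_t)\tilde L_{t-1}+\mathcal{E}(u_t,k_t)$.

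The error then satisfies $e_t=(1-\gamma_t)e_{t-1}+\eta_t$ with $\eta_t$ independent of the past. With thresholds $N=\epsilon^2/(2\log(2m/\delta))$ and $B=\epsilon/(2\log(2m/\delta))$, Lemma~\ref{lem:linear_combinations} gives, for $\alpha_t\le 1$,
\[
\nu_t^2\le(1-\alpha_t)^2N+\frac{4c_1^2\alpha_t^2}{k_t}\le N \quad\text{whenever } k_t\ge\frac{4c_1^2\alpha_t}{N}.
\]
The contraction frees variance $N(2\alpha_t-\alpha_t^2)\ge N\alpha_t$, which is exactly what a budget proportional to $\alpha_t$ injects. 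When $\alpha_t\ge 1$, $\gamma_t=1$ and $u_t=v_t$, so the step is a full restart and one uses $\|u_t\|_V\le 1$ instead. The scale parameter is handled the same way: $\beta_t\le\max\big((1-\gamma_t)\beta_{t-1},\beta_{\mathrm{fresh}}\big)\le B$ once $k_t\ge(2c_2\alpha_t+c_3)/B$. So the invariant $\nu_t^2\le N$, $\beta_t\le B$ holds by induction, and your $A^2$ term never arises.

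The two routes differ in how they forget old noise.
\begin{itemize}
\item Damping forgets continuously: each step discards a $\gamma_t$-fraction of the accumulated error, with a full reset only when $\alpha_t\ge 1$. In effect, Algorithm~\ref{alg:adaptive_algo} is a smoothed version of your scheme that performs a fractional restart of weight $\gamma_t$ every step.
\item Your scheme forgets discretely: nothing is forgotten inside a block, and then everything is reset.
\end{itemize}
Damping gives a one-line inductive invariant with no block bookkeeping. Its per-step cost is exactly proportional to the local $\alpha_t$, with no spikes of size $\Theta(c_1^2\log(m/\delta)/\epsilon^2)$ after each unit of path length. It also extends immediately to proxy step sizes, because $k_t$ is monotone in $\alpha_t$ (Theorem~\ref{thm:unknown_step_sizes}). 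Its price is a constant factor, since $\|u_t\|_V\le 2\alpha_t$ rather than $\alpha_t$. Your route needs only undamped differences and additivity of independent sums, at the cost of the counting argument and a bursty cost profile.

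The totals agree up to constants. But to prove the theorem as stated you must analyze the damped recursion above, because the guarantee is claimed for Algorithm~\ref{alg:adaptive_algo} specifically.
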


Note that the non-dynamic baseline has sample complexity $\tilde{\mathcal O}(m/\varepsilon^2)$, so
Theorem~\ref{thm:main,knownstep} improves upon it if $\sum_{t=2}^m\alpha_t=o(m)$.
The rest of this subsection is devoted to proving Theorem \ref{thm:main,knownstep}; 
we then generalize it in Section~\ref{sec:unknownstepsize} to the case where we do not know the step sizes but have a suitable norm estimation oracle that approximates them on the fly (cf. Theorem~\ref{thm:unknown_step_sizes}).

\paragraph{Our Algorithm}
The naive approach to maintaining running estimates is to recompute them from scratch at every step. By Lemma~\ref{lem:tail_bound}, this leads to a total sample complexity of $\mathcal{O}\!\left(m\left(\frac{c_1^2}{\epsilon^2} + \frac{c_2 + c_3}{\epsilon}\right)\log(m/\delta)\right)$. Intuitively, instead of recomputing estimates from scratch at each step—which incurs this total cost—the algorithm updates its running estimate by sampling only the difference between consecutive steps and reusing the previous estimate. We also use a damping factor to reduce the variance of these accumulated updates (taking inspiration from the idea in \cite{dynamictrace}). Assuming for now that the local step sizes $\alpha_t$ are known (an assumption we relax in the next subsection), the algorithm dynamically scales \emph{both} the sample budget and the damping factor directly in proportion to $\alpha_t$ (as opposed to scaling only the damping factor in \cite{dynamictrace}). Specifically, to avoid recomputing the estimate from scratch, we recycle the previous estimate $\tilde{L}_{t-1}$ by damping it by a factor of $(1-\gamma_t)$ and adding an estimate of the change (the residual); see Algorithm \ref{alg:adaptive_algo}.

\begin{algorithm}[!t]
\caption{Adaptive Algorithm}
\label{alg:adaptive_algo}
\begin{algorithmic}[1]
\State \textbf{Input:} Sequence of states $v_1, \dots, v_m$, step sizes $\alpha_t$,  accuracy $\epsilon$, failure prob. $\delta$, estimator $\mathcal{E}$.
\State Set concentration thresholds: $N \leftarrow \frac{\epsilon^2}{2 \log(2m/\delta)}$ and $B \leftarrow \frac{\epsilon}{2 \log(2m/\delta)}$.
\State \textbf{Base Case ($t=1$):}
\State Set base budget $k_1 \gets \left\lceil \max\left( \frac{c_1^2 \|v_1\|_V^2}{N}, \frac{c_2 \|v_1\|_V + c_3}{B} \right) \right\rceil$.
\State Compute base estimate $\tilde{L}_1 \gets \mathcal{E}(v_1, k_1)$.
\For{ $t = 2$ \textbf{to} $m$}
    \State Set damping factor $\gamma_t \leftarrow \min(1, \alpha_t)$.
    \State Set residual state $u_t \leftarrow v_t - (1 - \gamma_t)v_{t-1}$.
    \State Set local sample budget $k_t \leftarrow \left\lceil \max\left( \frac{4 c_1^2 \alpha_t}{N}, \frac{2 c_2 \alpha_t + c_3}{B} \right) \right\rceil$.
    \State Estimate residual $\hat{L}_t \leftarrow \mathcal{E}(u_t, k_t)$.
    \State Update running estimate $\tilde{L}_t \leftarrow (1 - \gamma_t)\tilde{L}_{t-1} + \hat{L}_t$.
\EndFor
\State \textbf{Output:} Sequence of running estimates $\tilde{L}_1, \dots, \tilde{L}_m$.
\end{algorithmic}
\end{algorithm}

\paragraph{Proving Theorem \ref{thm:main,knownstep}.}
We now prove that the adaptive algorithm maintains the desired error bound $\epsilon$ for all times. Note that assuming all $v_i$ norms are bounded by $1$ is without loss of generality, as this can be achieved by scaling. At a high level, the proof proceeds by induction on the sub-exponential parameters of the estimates at each time. To guarantee the target error via Lemma~\ref{lem:tail_bound}, we need these random variables to remain sufficiently ``well-concentrated,'' meaning their sub-exponential parameters must stay below the concentration thresholds $N$ and $B$. We establish this by first verifying that the base case holds for the initial estimate, and then using Lemma~\ref{lem:linear_combinations} in the scalar case, or the vector-valued norm-concentration assumption in Definition~\ref{def:sub_exponential}, to inductively show this concentration holds at every subsequent step. Finally this yields an improved bound that replaces $m \alpha_{\max}$ in \cite{dynamictrace} with a sharper $\sum \alpha_i$ term.
\begin{proof}
Let $N = \frac{\epsilon^2}{2 \log(2m/\delta)}$ and $B = \frac{\epsilon}{2 \log(2m/\delta)}$. Let the estimation error at step $t$ be $e_t = \tilde{L}_t - L(v_t)$. We prove by induction that for all $t$, the error $e_t$ is sub-exponential with parameters $\nu_t^2 \le N$ and $\beta_t \le B$. By standard sub-exponential tail bounds (Lemma~\ref{lem:tail_bound}), these parameters guarantee $\mathbb{P}[\|e_t\|_W \ge \epsilon] \le 2 \exp\left( - \min\left( \frac{\epsilon^2}{2N}, \frac{\epsilon}{2B} \right) \right) \le \frac{\delta}{m}$. A union bound over $m$ steps then yields the global success probability of at least $1-\delta$.

\textbf{Base Case ($t=1$):} The base estimate $\tilde{L}_1$ is computed with budget $k_1$. It is easy to verify by plugging in $k_1$ that its sub-exponential parameters satisfy $\nu_1^2 \le \frac{c_1^2 \|v_1\|_V^2}{k_1} \le N$ and $\beta_1 \le \frac{c_2 \|v_1\|_V + c_3}{k_1} \le B$. Thus, the base case holds.

\textbf{Inductive Step ($t>1$):} Assume the inductive hypothesis $\nu_{t-1}^2 \le N$ and $\beta_{t-1} \le B$. By the linearity of the algorithm, the update rule can be written in terms of errors as $e_t = (1-\gamma_t)e_{t-1} + \eta_t$, where $\eta_t = \hat{L}_t - L(u_t)$ is the unbiased error of the fresh estimate. First, we bound the norm of the residual state $u_t$: $  \|u_t\|_V = \|v_t - v_{t-1} + \gamma_t v_{t-1}\|_V \le \|v_t - v_{t-1}\|_V + \gamma_t \|v_{t-1}\|_V \le \alpha_t + \alpha_t = 2\alpha_t$.

In the scalar case, Lemma~\ref{lem:linear_combinations} gives the following parameter update. In the vector-valued case, this is the corresponding norm-concentration assumed in Definition~\ref{def:sub_exponential}. Thus, the variance parameter $\nu_t^2$ satisfies: $  \nu_t^2 = (1-\gamma_t)^2 \nu_{t-1}^2 + \nu_{fresh}^2 \le (1-\alpha_t)^2 N + \nu_{fresh}^2.$
For the fresh estimate error $\eta_t$, its variance parameter is bounded by $\nu_{fresh}^2 \le \frac{c_1^2 \|u_t\|_V^2}{k_t} \le \frac{4 c_1^2 \alpha_t^2}{k_t}$. To maintain the induction, we require $\nu_t^2 \le N$. This condition is met if:
\begin{equation}
    (1-\alpha_t)^2 N + \frac{4 c_1^2 \alpha_t^2}{k_t} \le N \implies \frac{4 c_1^2 \alpha_t^2}{k_t} \le N(2\alpha_t - \alpha_t^2) \implies k_t \ge \frac{4 c_1^2 \alpha_t}{N (2-\alpha_t)}
\end{equation}
Since $\alpha_t \in [0, 1]$, it holds that $(2-\alpha_t) \ge 1$. Thus, setting $k_t \ge \frac{4 c_1^2 \alpha_t}{N}$ is enough to ensure $\nu_t^2 \le N$.

Similarly, the scale parameter $\beta_t$ satisfies $\beta_t \le \max((1-\gamma_t)\beta_{t-1}, \beta_{fresh})$. For the fresh estimate, substituting our earlier bound $\|u_t\|_V \le 2\alpha_t$ into Definition \ref{def:well_concentrated} yields $\beta_{fresh} \le \frac{c_2 \|u_t\|_V + c_3}{k_t} \le \frac{2 c_2 \alpha_t + c_3}{k_t}$. We require $\beta_{fresh} \le B$, which implies $k_t \ge \frac{2 c_2 \alpha_t + c_3}{B}$. Thus, the induction holds.

By a simple union bound and using Lemma~\ref{lem:tail_bound}, with probability at least $1-\delta$, the estimates satisfy $\|\tilde{L}_t - L(v_t)\|_W \le \epsilon$ simultaneously for all $t \in [m]$. Now we bound the cumulative sample complexity. Summing the local sample budgets $k_t$ over all $m$ steps completes the proof. First we have  $\sum_{t=1}^m k_t \le k_1 + \sum_{t=2}^m \left( \frac{4 c_1^2 \alpha_t}{N} + \frac{2 c_2 \alpha_t + c_3}{B} \right)$. Simplifying, this is equal to
\begin{align*}
    &\mathcal{O}\left( \left(\frac{c_1^2}{\epsilon^2} + \frac{c_2 + c_3}{\epsilon}\right) \log(m/\delta) \right) + \sum_{t=2}^m \alpha_t \cdot \mathcal{O}\left( \left(\frac{c_1^2}{\epsilon^2} + \frac{c_2}{\epsilon}\right) \log(m/\delta) \right) + \sum_{t=2}^m \mathcal{O}\left( \frac{c_3 \log(m/\delta)}{\epsilon} \right) \\
    &= \mathcal{O}\left( \left(\frac{c_1^2}{\epsilon^2} + \frac{c_2}{\epsilon}\right) \log(m/\delta) \left( 1 + \sum_{t=2}^m \alpha_t \right) + \frac{c_3 m \log(m/\delta)}{\epsilon} \right). \qedhere
\end{align*}

\end{proof}

\subsection{Unknown Step Sizes}
\label{sec:unknownstepsize}

In the preceding analysis, we assumed explicit knowledge of the local step sizes $\alpha_t=\|v_t-v_{t-1}\|_V$ to set our parameters. In practice, $\alpha_t$ is often unknown, but oftentimes can be efficiently estimated on the fly, as we will see in our applications in Sections \ref{sec:trace_estimation} and \ref{sec:monte_carlo}.
Abstractly, let us assume access to a Norm Estimation Oracle, $\mathcal{N}(v,k)$, which takes an input state $v \in V$ and a resource budget $k$, and outputs an approximation of the norm $\|v\|_V$.
We split the computation at each time step $t$ into two phases as follows:

\begin{enumerate}[leftmargin=*]
    \item \textbf{Estimating $\alpha_t$:} We query the norm oracle $\mathcal{N}(v_t-v_{t-1}, k_{norm})$ to estimate the change magnitude $\alpha_t$.
    We allocate a budget $k_{norm}$ to obtain an estimate $\hat{\alpha}_t$ such that $\hat{\alpha}_t \ge 0.9\alpha_t$ with a failure probability of at most $\delta/m$.
    We then construct our proxy step size by dividing this estimate by 0.9 (we use $0.9$ for illustrative purposes; any constant $c \in (0,1)$ works here), setting $\tilde{\alpha}_t = \hat{\alpha}_t/0.9$.
    This ensures that $\tilde{\alpha}_t \ge \alpha_t$ with failure probability $\delta/m$ and $\tilde{\alpha}_t = \Theta(\alpha_t)$.
    
    \item \textbf{Estimating $L(v_t)$:} We instantiate the target budget $k_t$ and damping factor $\gamma_t$ using the proxy step size $\tilde{\alpha}_t$ in place of the true $\alpha_t$, and use the Adaptive Algorithm (Algorithm \ref{alg:adaptive_algo}).
\end{enumerate}

The main idea is that approximating $\alpha_t$ up to a constant factor is sufficient; this only increases the sample complexity by a constant factor while still ensuring correctness, yielding the following result:

\begin{theorem}[Dynamic Sample Complexity, unknown step sizes]
\label{thm:unknown_step_sizes}
Assume $\|v_t\|_V \le 1$ for all $t$. Given a well-concentrated estimator $\mathcal{E}$ and a Norm Estimation Oracle $\mathcal{N}$ that requires $k_{norm}$ queries per step to return an estimate $\hat{\alpha}_t \ge 0.9\alpha_t$ with failure prob. $\delta/m$, setting the proxy step size to $\tilde{\alpha}_t = \hat{\alpha}_t / 0.9$, the two-phase adaptive algorithm guarantees that with prob. $\ge 1-2\delta$, the estimates satisfy $\|\tilde{L}_t - L(v_t)\|_W \le \epsilon$ simultaneously for all $t \in [m]$. The cumulative sample complexity is
\[ \mathcal{O}\left( \left(\frac{c_1^2}{\epsilon^2} + \frac{c_2}{\epsilon}\right) \log(m/\delta) \left( 1 + \sum_{t=2}^m \alpha_t \right) + \frac{c_3 m \log(m/\delta)}{\epsilon} + \sum_{t=2}^m k_{norm} \right).\]
\end{theorem}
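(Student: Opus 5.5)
The plan is to reduce to the proof of Theorem~\ref{thm:main,knownstep} by conditioning on the event that every proxy step size overestimates the true one. Let $G$ be the event that $\hat{\alpha}_t \ge 0.9\alpha_t$ for all $t=2,\dots,m$. By the oracle guarantee and a union bound over the $m-1$ steps, $\mathbb{P}[G^c] \le \delta$. On $G$ we have $\tilde{\alpha}_t \ge \alpha_t$ for every $t$. The main technical care is measurability and independence: the proxies $\tilde{\alpha}_t$ depend only on the oracle's randomness, so the estimator calls $\mathcal{E}(u_t,k_t)$ are fresh and independent of the past. The cleanest route is to condition on the full realization of the oracle outputs and treat $\tilde{\alpha}_t$ as fixed, deterministic parameters. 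This requires the oracle randomness to be independent of the estimator randomness, which I assume; if it is not, I would fall back to a filtration-based argument in which each $\tilde{\alpha}_t$ is measurable with respect to the past and the fresh estimate is conditionally sub-exponential.

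Next I rerun the induction of Theorem~\ref{thm:main,knownstep} with $\gamma_t=\min(1,\tilde{\alpha}_t)$ and $k_t$ defined via $\tilde{\alpha}_t$. The one inequality that changes is the residual norm bound: $\|u_t\|_V \le \alpha_t + \gamma_t\|v_{t-1}\|_V \le \tilde{\alpha}_t + \tilde{\alpha}_t = 2\tilde{\alpha}_t$, which uses $\alpha_t\le\tilde{\alpha}_t$ on $G$.

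I also have to handle the case $\tilde{\alpha}_t>1$, which the known-step proof sidestepped by assuming $\alpha_t\le 1$. If $\tilde{\alpha}_t\ge 1$, then $\gamma_t=1$ and $u_t=v_t$. The previous error is then discarded entirely, and $\|u_t\|_V\le 1$, so $k_t$ far exceeds what is needed to bring $\nu_t^2$ below $N$ and $\beta_t$ below $B$. If $\tilde{\alpha}_t<1$, the variance recursion becomes $(1-\tilde{\alpha}_t)^2N + 4c_1^2\tilde{\alpha}_t^2/k_t \le N$, which holds for $k_t \ge 4c_1^2\tilde{\alpha}_t/N$ exactly as before; the scale-parameter step is likewise unchanged. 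Conditional on the oracle outputs and on $G$, Lemma~\ref{lem:tail_bound} together with a union bound gives failure probability at most $\delta$. Adding $\mathbb{P}[G^c]\le\delta$ gives overall success probability at least $1-2\delta$.

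For the cost, the estimator budget is the sum from the known-step proof with $\tilde{\alpha}_t$ in place of $\alpha_t$. Since $\tilde{\alpha}_t=\Theta(\alpha_t)$, i.e. $\tilde{\alpha}_t\le C\alpha_t$ for a constant $C$, we get $\sum_t\tilde{\alpha}_t=\mathcal{O}(\sum_t\alpha_t)$, and the norm oracle adds $\sum_{t\ge 2}k_{norm}$. I expect this step to be the main obstacle. The oracle guarantee as stated is only one-sided ($\hat{\alpha}_t\ge0.9\alpha_t$), so the upper bound $\tilde{\alpha}_t=\mathcal{O}(\alpha_t)$ must be taken from the stated property that $\tilde{\alpha}_t=\Theta(\alpha_t)$. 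That property holds either deterministically or on an additional high-probability event, which would be absorbed into the same $\delta$ budget (or the bound holds on $G$ intersected with that event). With this, the total is $\mathcal{O}\big((c_1^2/\epsilon^2+c_2/\epsilon)\log(m/\delta)(1+\sum_t\alpha_t)+c_3m\log(m/\delta)/\epsilon+\sum_t k_{norm}\big)$, as claimed.
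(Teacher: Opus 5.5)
Your proposal is correct and follows the same route as the paper. You union-bound the oracle failures so that $\tilde{\alpha}_t \ge \alpha_t$ for all $t$ with probability at least $1-\delta$, rerun the known-step induction with the proxies (costing another $\delta$), and add $\sum_{t} k_{norm}$ to the budget. You are also more careful than the paper's appendix, which appeals only to monotonicity of $k_t$ in the step size: you account for $\gamma_t=\min(1,\tilde{\alpha}_t)$ changing the residual $u_t$, for the case $\tilde{\alpha}_t>1$, and for the fact that the cost bound needs the upper half of $\tilde{\alpha}_t=\Theta(\alpha_t)$, which the one-sided oracle guarantee does not by itself supply.
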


As compared to Theorem~\ref{thm:main,knownstep}, Theorem~\ref{thm:unknown_step_sizes} only has an extra overhead term of $\sum_{t=2}^m k_{norm}$ for estimating step sizes on the fly.  The proof of Theorem \ref{thm:unknown_step_sizes} appears in Appendix \ref{appendix:thm2}. In our later applications (see Section \ref{sec:trace_estimation}), this overhead typically does not have an $\epsilon$ dependency.

\section{Application: Dynamic Trace Estimation}\label{sec:trace_estimation}

We first apply our general framework to the problem of dynamic matrix trace estimation in the matrix-vector product model, as introduced in Section~\ref{sec:introduction}. 
\begin{definition}[The Dynamic Trace Estimation Problem]
    Given implicit matrix-vector multiplication access to a sequence of matrices $A_1, \dots, A_m \in \mathbb{R}^{n \times n}$, maintain trace estimates $t_1, \dots, t_m$ such that at every step $i$, $|t_i - \text{tr}(A_i)| \le \epsilon$ with high probability. We assume bounded Frobenius norms $\|A_i\|_F \le 1$ and unknown local step sizes $\alpha_i = \|A_i - A_{i-1}\|_F \le 1$.
\end{definition}

In relation to our general framework, we observe that the vector space $V = \mathbb{R}^{n \times n}$ equipped with the Frobenius norm $||\cdot||_F$, and the target space $W = \mathbb{R}$ with the absolute value norm. The linear map is the trace operator, $L(A) = \text{tr}(A)$.

\paragraph{Hutchinson's Static Estimator}
The standard randomized algorithm for implicit trace estimation is Hutchinson's estimator \cite{hutchinson1989stochastic}. For a budget of $k$ queries, it is defined as: $ \mathcal{E}(A, k) = h_k(A) = \frac{1}{k}\sum_{j=1}^k g_j^\top A g_j$
where $g_1, \dots, g_k \in \mathbb{R}^n$ are independent Rademacher vectors. It is known that Hutchinson's estimator is unbiased ($\mathbb{E}[h_k(A)] = \text{tr}(A)$) and has sub-exponential tail bounds \cite{rudelson2013hansonwright}. Specifically, the estimator error is sub-exponential with variance parameter $\nu \le \frac{c_1 \|A\|_F}{\sqrt{k}}$ and scale parameter $\beta \le \frac{c_2 \|A\|_F}{k}$ for some constants $c_1,c_2$. Thus, Hutchinson's estimator satisfies the properties of a Well-Concentrated Estimator (Definition \ref{def:well_concentrated}) with $c_1,c_2$ and  $c_3=0$. 

\subsubsection{Estimating Step Sizes on the Fly}
\label{onthefly}

In practice, the exact Frobenius norm of the difference, $\alpha_t = ||A_t - A_{t-1}||_F$, is unknown. However, we can construct the Norm Estimation Oracle $\mathcal{N}$ required by our two-phase procedure using Hutchinson's estimator.  Let $\Delta_t = A_t - A_{t-1}$. Observe that the squared Frobenius norm is the trace of its positive semi-definite Gram matrix: $\alpha_t^2 = ||\Delta_t||_F^2 = \text{tr}(\Delta_t^\top \Delta_t)$. We apply Hutchinson's estimator to the matrix $M = \Delta_t^\top \Delta_t$ to obtain an estimate $y$ of $\alpha_t ^ 2 = \text{tr}(M)$. 

We need our initial estimate $\hat{\alpha}_t = \sqrt{y}$ to satisfy $\hat{\alpha}_t \ge 0.9 \alpha_t$, or $y \ge 0.81 \alpha_t^2$. This implies that our Hutchinson's estimator must have additive error of at most $\epsilon' = \alpha_t^2 - 0.81\alpha_t^2 = 0.19 \alpha_t^2$. We now calculate the sample budget $k_{norm}$ required to achieve this error $\epsilon'$ with a failure probability of $\delta/m$. From sub-exponential concentration (Lemma \ref{lem:tail_bound}), the required sample complexity is:
\begin{equation}
    k_{norm} = \mathcal{O}\left( \max\left( \frac{||M||_F^2 \log(m/\delta)}{(\epsilon')^2}, \frac{||M||_F \log(m/\delta)}{\epsilon'} \right) \right)
\end{equation}
Because $M = \Delta_t^\top \Delta_t$ is positive semi-definite, we have $||M||_F \le \text{tr}(M) = \alpha_t^2$. Substituting this and our target error $\epsilon' = 0.19 \alpha_t^2$ into the equation yields: $ k_{norm} = \mathcal{O}\left( \max\left( \frac{(\alpha_t^2)^2 \log(m/\delta)}{(0.19 \alpha_t^2)^2}, \frac{\alpha_t^2 \log(m/\delta)}{0.19 \alpha_t^2} \right) \right) = \mathcal{O}(\log(m/\delta))$. The terms nicely cancel. Thus, allocating budget $k_{norm} = \mathcal{O}(\log(m/\delta))$ samples guarantees that $\hat{\alpha}_t \ge 0.9\alpha_t$ with probability at least $1 - \delta/m$. Following our framework, setting the proxy step size to $\tilde{\alpha}_t = \hat{\alpha}_t / 0.9$ ensures $\tilde{\alpha}_t \ge \alpha_t$ with prob. at least $1-\delta/m$. By Theorem~\ref{thm:unknown_step_sizes} (setting $c_3=0$) we get the following theorem. 

\begin{theorem}[Dynamic Trace Estimation, unknown step sizes]\label{thm:trace}
Let $\epsilon, \delta \in (0,1)$. Given a sequence of matrices $A_1, \dots, A_m$ with unknown step sizes $\alpha_i = \|A_i - A_{i-1}\|_F$, the two-phase estimation procedure guarantees that with probability at least $1-2\delta$, the dynamic trace estimates satisfy $|t_i - \text{tr}(A_i)| \le \epsilon$ simultaneously for all $i \in [m]$. The total MVPS required is bounded by: $  \mathcal{O}\left( m \log(m/\delta) + \frac{\log(m/\delta)}{\epsilon^2}\left(1 + \sum_{i=2}^m \alpha_i\right) \right)$.
\end{theorem}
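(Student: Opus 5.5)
The proof instantiates Theorem~\ref{thm:unknown_step_sizes} with $V=\mathbb{R}^{n\times n}$ under $\|\cdot\|_F$, $W=\mathbb{R}$, $L=\mathrm{tr}$, and $\mathcal{E}$ Hutchinson's estimator. The hypotheses to check are three: the states satisfy $\|A_t\|_F\le 1$; $\mathcal{E}$ is well-concentrated with constants $c_1,c_2$ and $c_3=0$; and the norm oracle built from Hutchinson's estimator on $M=\Delta_t^\top\Delta_t$ returns $\hat\alpha_t\ge 0.9\alpha_t$ with failure probability $\delta/m$ using $k_{norm}=\mathcal{O}(\log(m/\delta))$ queries. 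The first is assumed in the problem. The second follows from the Hanson--Wright sub-exponential bound cited above. The third is the computation in Section~\ref{onthefly}.

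For the oracle I would also spell out the query accounting. Each Hutchinson sample $g^\top M g=\|\Delta_t g\|_2^2$ needs only $\Delta_t g=A_tg-A_{t-1}g$, which costs two MVPs and no access to $\Delta_t^\top$. So $k_{norm}$ Hutchinson samples cost $\mathcal{O}(k_{norm})$ MVPs. Similarly, each sample of the residual estimator on $u_t=A_t-(1-\gamma_t)A_{t-1}$ costs two MVPs, so sample complexity and MVP count agree up to a factor of $2$.

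One degenerate case needs care: $\alpha_t=0$. Then $y=0$ deterministically, $\tilde\alpha_t=0$, and the step consumes no residual samples (budget $\lceil 0\rceil=0$), which is consistent with the analysis. The relative-error argument divides by $\alpha_t^2$, so I would treat this case separately; it is trivial.

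A second check is that the proxy $\tilde\alpha_t$ may exceed $1$ even though $\alpha_t\le 1$. This matters because the proof of Theorem~\ref{thm:main,knownstep} used $\alpha_t\le 1$ through $\gamma_t=\min(1,\alpha_t)$. I would set $\tilde\alpha_t=\min(1,\hat\alpha_t/0.9)$. This still dominates $\alpha_t$ and remains $\Theta(\alpha_t)$, since Hutchinson's estimate concentrates to within a constant factor of $\alpha_t^2$ with the same $\mathcal{O}(\log(m/\delta))$ budget. This gives the upper bound $\tilde\alpha_t=\mathcal{O}(\alpha_t)$ that the complexity bound needs, and I would argue it two-sidedly: a $0.19\alpha_t^2$ additive error also gives $y\le 1.19\alpha_t^2$.

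With these in place, substitute into the bound of Theorem~\ref{thm:unknown_step_sizes} with $c_3=0$ and absolute constants $c_1,c_2$. Because $\epsilon<1$, the $c_2/\epsilon$ term is dominated by $1/\epsilon^2$, which gives $\mathcal{O}\bigl(\tfrac{\log(m/\delta)}{\epsilon^2}(1+\sum_{i\ge2}\alpha_i)\bigr)$. The oracle contributes $\sum_{t=2}^m k_{norm}=\mathcal{O}(m\log(m/\delta))$. The success probability $1-2\delta$ comes from a union bound over the $m$ oracle failures and the estimation failures, exactly as in Theorem~\ref{thm:unknown_step_sizes}.

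I expect the main obstacle to be the two-sided control of $\tilde\alpha_t$, namely clipping and the upper bound, needed so that Theorem~\ref{thm:unknown_step_sizes} applies rigorously. The remaining steps are direct substitution.
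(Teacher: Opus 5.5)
Your proposal is correct and follows the paper's route. Like the paper, you instantiate Theorem~\ref{thm:unknown_step_sizes} with Hutchinson's estimator ($c_3=0$) and the Hutchinson-based norm oracle on $\Delta_t^\top\Delta_t$ using $k_{norm}=\mathcal{O}(\log(m/\delta))$, then absorb $c_2/\epsilon$ into $1/\epsilon^2$; your extra checks (two MVPs per sample, the $\alpha_t=0$ case, clipping $\tilde\alpha_t$ at $1$, and the two-sided bound $y\le 1.19\alpha_t^2$ giving $\tilde\alpha_t=\mathcal{O}(\alpha_t)$) fill in details the paper leaves implicit, notably that Theorem~\ref{thm:unknown_step_sizes}'s oracle hypothesis supplies only the lower bound on $\hat\alpha_t$ while its complexity bound also needs the upper one.
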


The bound of \cite{dynamictrace} for known steps is
$
\frac{\log(m/\delta)}{\epsilon^2}\left(1 + m \alpha_{\max}\right).
$ Our result replaces the \(m \alpha_{\max}\) term with the sharper quantity \(\sum_{i=2}^m \alpha_i\) in this setting, and moreover provides a way to estimate the step sizes on the fly with only an additional \(m \log(m/\delta)\) overhead.

\subsection{Dynamic Trace of Matrix Powers}
\label{sec:trace_powers}

We now show that our framework naturally extends to tracking the trace of matrix powers. Note that while this is a non-linear function, we can still essentially reduce it to trace estimation. The main idea is that one can show (by standard telescoping) that powers of slowly changing matrices also evolve slowly, which allows us to apply our framework. As before, we assume MVP access to a sequence of matrices and we wish to maintain an $\epsilon$ approximation of $\text{tr}(A_t^k)$ at all steps $t$.


Our starting point is that for consecutive matrices, the change in the $k$-th power can be written as $ A_t^k - A_{t-1}^k = \sum_{j=0}^{k-1} A_t^j (A_t - A_{t-1}) A_{t-1}^{k-1-j}$.
Taking the Frobenius norm and using sub-multiplicativity of the Frobenius norm ($\|XY\|_F \le \|X\|_2 \|Y\|_F$), we obtain $  \|A_t^k - A_{t-1}^k\|_F \le \sum_{j=0}^{k-1} \|A_t\|_2^j \|A_t - A_{t-1}\|_F \|A_{t-1}\|_2^{k-1-j} \le k \alpha_t$. 
Thus, the local step size for the $k$-th matrix power sequence is bounded by $k\alpha_t$. Because we only assume MVP access to the sequence $A_t$, evaluating Hutchinson's estimator on the $k$-th matrix power requires computing products of the form $A_t^k v$. We simulate this by applying the matrix sequentially, $A_t(A_t(\dots A_t(v)\dots))$, which incurs a multiplicative overhead of $k$ per sample.  When the step sizes $\alpha_t$ are unknown, we estimate them dynamically. Since $\|A_t^k - A_{t-1}^k\|_F^2 = \text{tr}((A_t^k - A_{t-1}^k)^T (A_t^k - A_{t-1}^k))$, we can approximate the step size of the $k$-th powers via Hutchinson's estimator as in Section~\ref{onthefly}. Again, computing the matrix-vector product $(A_t^k - A_{t-1}^k)v$ involves applying $A_t$ and $A_{t-1}$ sequentially $k$ times, which is also a $k$ multiplicative overhead. Applying the Adaptive Algorithm with unknown step sizes requires $\mathcal{O}\left(\frac{\log(m/\delta)}{\epsilon^2}(1 + \sum_{t=2}^m k\alpha_t)\right)$ total samples, which we multiply by a $k$ for the sample complexity.

\begin{corollary}[Dynamic Trace of Matrix Powers]\label{cor:tracepower}
Let $\epsilon, \delta \in (0,1)$ and integer $k \ge 1$. Given a sequence of matrices $A_1, \dots, A_m$ with unknown step sizes $\alpha_t = \|A_t - A_{t-1}\|_F$, the Adaptive Algorithm guarantees that with probability at least $1-\delta$, the trace estimates satisfy $|\tilde{T}_t - \text{tr}(A_t^k)| \le \epsilon$ simultaneously for all $t \in [m]$. The total matrix-vector multiplications (MVPs) required across the sequence is bounded by: $  \mathcal{O}\left( \frac{\log(m/\delta)}{\epsilon^2} \left(k + k^2 \sum_{t=2}^m \alpha_t\right) \right)$.
\end{corollary}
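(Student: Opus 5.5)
The plan is to reduce the corollary to the general framework. I view the sequence of $k$-th powers $B_t := A_t^k$ as the evolving states in $V=\mathbb{R}^{n\times n}$ with $\|\cdot\|_F$, with $L=\mathrm{tr}$ and Hutchinson's estimator as $\mathcal{E}$. Hutchinson's estimator is well-concentrated (Definition~\ref{def:well_concentrated}) with absolute constants $c_1,c_2$ and $c_3=0$, as noted in Section~\ref{sec:trace_estimation}.

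\textbf{Normalization and step sizes.} I first check the normalization $\|B_t\|_F\le 1$. This uses $\|A_t^k\|_F\le \|A_t\|_2^{k-1}\|A_t\|_F\le 1$, since $\|A_t\|_2\le\|A_t\|_F\le 1$. Next I bound the local step of the power sequence. The telescoping identity $A_t^k-A_{t-1}^k=\sum_{j=0}^{k-1}A_t^j(A_t-A_{t-1})A_{t-1}^{k-1-j}$ together with $\|XY\|_F\le\|X\|_2\|Y\|_F$ gives $\alpha_t^{(k)}:=\|B_t-B_{t-1}\|_F\le k\alpha_t$. The only property of the step sizes used in the inductive step of Theorem~\ref{thm:main,knownstep} is that the damping factor satisfies $\gamma_t\ge$ the true step (so that $\|u_t\|_F\le 2\gamma_t$). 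Consequently, running Algorithm~\ref{alg:adaptive_algo} with any valid upper proxy $\tilde\alpha_t\ge\alpha_t^{(k)}$ with $\tilde\alpha_t=O(k\alpha_t)$ preserves correctness. It also yields a Hutchinson sample count of $O\bigl(\frac{\log(m/\delta)}{\epsilon^2}(1+\sum_t k\alpha_t)\bigr)$; here $c_2/\epsilon$ is absorbed into $1/\epsilon^2$ because $\epsilon<1$.

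\textbf{Unknown steps.} I obtain the proxy via Theorem~\ref{thm:unknown_step_sizes}. I estimate $\|B_t-B_{t-1}\|_F^2=\mathrm{tr}(M_t)$ with $M_t=(B_t-B_{t-1})^\top(B_t-B_{t-1})\succeq 0$, using Hutchinson exactly as in Section~\ref{onthefly}. The cancellation $\|M_t\|_F\le\mathrm{tr}(M_t)$ gives a per-step budget of $O(\log(m/\delta))$ samples to ensure $\hat\alpha_t\ge 0.9\,\alpha_t^{(k)}$ with failure probability $\delta/(2m)$. I then set $\tilde\alpha_t=\min(1,\hat\alpha_t/0.9)$. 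The cap at $1$ is harmless, because $\gamma_t=\min(1,\cdot)$ already caps the damping, and the residual bound $\|u_t\|_F\le 2$ still holds when $\gamma_t=1$. To get the stated confidence $1-\delta$ rather than $1-2\delta$, I run both the oracle and the main estimator at failure level $\delta/2$. This only changes constants inside $\log(m/\delta)$.

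\textbf{MVP accounting.} Each Hutchinson sample on $B_t$ or on $B_t-B_{t-1}$ is simulated by $k$ (respectively $2k$) sequential MVPs. Multiplying the estimator's sample count by $k$ gives $O\bigl(\frac{\log(m/\delta)}{\epsilon^2}(k+k^2\sum_t\alpha_t)\bigr)$.

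\textbf{Main obstacle.} The hardest step is showing that the oracle cost, $O(k\log(m/\delta))$ MVPs per step, does not appear as a separate $km\log(m/\delta)$ term. I would handle this by pairing each step's oracle cost with that step's estimation budget. In Algorithm~\ref{alg:adaptive_algo}, $k_t$ is a ceiling and hence at least $1$ at every step, so every step already pays $\Omega(\log(m/\delta))$ samples, i.e.\ $\Omega(k\log(m/\delta))$ MVPs, after rescaling $N,B$ by the $\log(2m/\delta)$ factor. The oracle cost is therefore within a constant factor of that per-step floor, and it falls under the same accounting that Theorem~\ref{thm:main,knownstep} uses when it drops the rounding terms of the $k_t$. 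I expect this charging argument to be the delicate part. If it fails to absorb the term under a strict count, the fallback is to reuse the same Rademacher vectors $g_j$ for both the residual estimate and the norm estimate at step $t$, so that the oracle consumes no MVPs beyond those the residual estimator already performs. This reuse introduces dependence between $\hat\alpha_t$ and $\hat L_t$, which the concentration argument would then have to handle.
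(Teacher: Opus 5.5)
Your reduction is the paper's route: $\|A_t^k\|_F\le 1$, the telescoping bound $\|A_t^k-A_{t-1}^k\|_F\le k\alpha_t$, the Gram-matrix Hutchinson oracle with $O(\log(m/\delta))$ samples per step, and the factor-$k$ MVP overhead. Your $\delta/2$ split and the cap $\tilde\alpha_t\le 1$ are sound refinements. One small slip: a residual sample on $u_t=A_t^k-(1-\gamma_t)A_{t-1}^k$ needs both $A_t^kg$ and $A_{t-1}^kg$, so it costs $2k$ MVPs; this only changes constants.

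The genuine gap is the step you flag yourself, absorbing the oracle cost. Your charging argument needs a per-step floor of $\Omega(\log(m/\delta))$ samples, and Algorithm~\ref{alg:adaptive_algo} has no such floor. The factor $\log(2m/\delta)$ enters $k_t$ only through $1/N$ and $1/B$, where it multiplies the step size. So with $c_3=0$ and $\tilde\alpha_t\lesssim\epsilon^2/\log(m/\delta)$ you get $k_t=1$, and $k_t=0$ when $\tilde\alpha_t=0$. ``Rescaling $N,B$'' would change the algorithm and its cost. Appealing to Theorem~\ref{thm:main,knownstep}'s dropped rounding terms does not help either: those ceilings already cost up to $m-1$ samples, which its bound with $c_3=0$ does not cover. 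The fallback of reusing Rademacher vectors is circular, because $k_t$ is computed from $\tilde\alpha_t$. Moreover, when $k_t\ll\log(m/\delta)$ there are too few samples for the oracle to reach failure probability $\delta/m$. For example, with $\Delta_t=w(e_1-e_2)^\top$ a single sample $g^\top\Delta_t^\top\Delta_t g$ vanishes with probability $1/2$.

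More fundamentally, no argument can remove an $m$-dependent term when step sizes are unknown, so the displayed bound cannot be proved as stated. Take $A_t\equiv 0$. Then $\sum_t\alpha_t=0$ and the claimed cost is $O(k\log(m/\delta)/\epsilon^2)$, independent of $m$. Suppose that with probability $p_t$ a procedure makes no MVP with $A_t$ before outputting $\tilde T_t$. On that event its behaviour has the same law as on the sequence where only $A_t$ is replaced by $e_1e_1^\top$; that sequence has steps of size $1$, and $\text{tr}(A_t^k)$ jumps from $0$ to $1$. So for $\epsilon<1/2$ the procedure errs on one of the two sequences with probability at least $p_t/2$. Correctness forces $p_t\le 2\delta$ for every $t$, hence $\Omega(m)$ MVPs on the zero sequence for, say, $\delta\le 1/4$.

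What your argument actually establishes is $\mathcal{O}\bigl(\frac{\log(m/\delta)}{\epsilon^2}(k+k^2\sum_{t=2}^m\alpha_t)+km\log(m/\delta)\bigr)$, the exact analogue of Theorem~\ref{thm:trace}. The paper's own derivation reaches the displayed bound only by silently omitting the $\sum_t k_{norm}$ overhead of Theorem~\ref{thm:unknown_step_sizes}, along with the ceiling overhead. You should state the extra term rather than try to absorb it.
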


\subsection{Application: Dynamic Spectral Density Estimation}
\label{sec:spectral_density}
As a direct application of tracking the trace of matrix powers, we consider the problem of estimating a matrix's spectral density. For an $n \times n$ matrix, the \emph{spectrum} is the vector of its sorted eigenvalues, $\lambda = (\lambda_1, \dots, \lambda_n)$ where $\lambda_1 \ge \dots \ge \lambda_n$. We consider an $\epsilon$-approximate spectrum defined as $\frac{1}{n} \|\tilde{\lambda} - \lambda\|_1 \le \epsilon$. This is also the \emph{Wasserstein distance} between the discrete distribution of eigenvalue approximations to the true eigenvalues. Applications include estimating the spectrum of the (changing) Hessian, which  is useful to study the dynamics of training~\cite{pmlr-v97-ghorbani19b} and the spectral density of a graph Laplacian, which  reveals the presence of communities at multiple scales \cite{cohen2017approximating}.  It is known that in the static case, roughly $e^{\mathcal{O}(1/\epsilon)}$ MVPs suffice to obtain this estimation \cite{cohen2017approximating} (see Theorem \ref{thm:spectral_moments}). In the following theorem, we run $K$ independent instances of our Adaptive Algorithm, where the $k$-th stream tracks the $k$-th spectral moment $\text{tr}(A_t^k)$. To guarantee uniform success across all $K$ streams and $m$ steps with probability $1-\delta$, we set the failure probability for each instance to $\delta/K$. The proof of the theorem is deferred to Appendix \ref{sec:spectral_density_proof}.

\begin{theorem}[Dynamic Spectral Estimation]\label{thm:spectral_estimation}
Let $\epsilon, \delta \in (0,1)$. Given implicit matrix-vector multiplication access to a sequence of symmetric matrices $A_1, \dots, A_m$ with eigenvalues bounded in $[-1, 1]$, bounded Frobenius norms $\|A_t\|_F \le 1$, and unknown local step sizes $\alpha_t = \|A_t - A_{t-1}\|_F$, the dynamic estimation procedure outlined above guarantees that with probability at least $1-\delta$, the maintained eigenvalue estimates form an $\epsilon$-approximate spectrum in Wasserstein distance simultaneously for all $t \in [m]$. The cumulative MVPs is bounded by $  e^{\mathcal{O}(1/\epsilon)} \log(m/\delta) \left( 1 + \frac{1}{\epsilon} \sum_{t=2}^m \alpha_t \right).$
\end{theorem}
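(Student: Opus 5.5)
The plan is to reduce to the static spectral-moment result of \cite{cohen2017approximating} (Theorem~\ref{thm:spectral_moments}). I will use it in the following form: with $K = \mathcal{O}(1/\epsilon)$, if we have estimates $\tilde{\tau}_k$ of the normalized moments $\frac{1}{n}\mathrm{tr}(A^k)$ for $k = 1,\dots,K$, and each is accurate to additive error roughly $\epsilon' = e^{-\mathcal{O}(1/\epsilon)}$ (in practice $2^{-\mathcal{O}(K)}$, coming from the size of Chebyshev coefficients), then a deterministic post-processing step returns an $\epsilon$-approximate spectrum in Wasserstein distance. Given this, it suffices to maintain, at every step $t$ and for every $k \le K$, an estimate of $\mathrm{tr}(A_t^k)$ with additive error $\epsilon'' = n\epsilon'$ in unnormalized units. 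If the static theorem is phrased in terms of Chebyshev moments $\mathrm{tr}(T_k(A_t))$, I will use the same reduction with $T_k$ in place of monomials; both satisfy the change bound used below. Since the post-processing is deterministic and applied separately to each $t$, correctness at every step follows once all moment estimates are simultaneously accurate.

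\emph{Step 1: per-moment tracking.} I will run $K$ independent copies of the Adaptive Algorithm, with unknown step sizes, through Corollary~\ref{cor:tracepower}, each with failure probability $\delta/K$. The $k$-th copy tracks $\mathrm{tr}(A_t^k)$. The telescoping bound $\|A_t^k - A_{t-1}^k\|_F \le k\alpha_t$ needs $\|A_t\|_2 \le 1$, which holds because the eigenvalues lie in $[-1,1]$. For Chebyshev polynomials, the analogous bound is $\|T_k(A_t) - T_k(A_{t-1})\|_F \le k^2\alpha_t$, by Markov's inequality on $[-1,1]$. I will also need $\|A_t^k\|_F \le \|A_t\|_F \le 1$ so that the norm assumption of Theorem~\ref{thm:main,knownstep} applies. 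For $T_k$, the norm may be as large as $\sqrt{n}$, so I will rescale and track $n^{-1/2}$-normalized quantities, absorbing the factor into $\epsilon''$. A union bound over the $K$ streams gives overall success probability $1-\delta$.

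\emph{Step 2: cost accounting.} By Corollary~\ref{cor:tracepower} with accuracy $\epsilon''$ and failure probability $\delta/K$, stream $k$ costs
\[
\mathcal{O}\!\left(\frac{\log(mK/\delta)}{\epsilon''^2}\left(k + k^2\sum_t \alpha_t\right)\right)
\]
MVPs. Summing over $k \le K = \mathcal{O}(1/\epsilon)$ gives
\[
\frac{\log(mK/\delta)}{\epsilon''^2}\left(K^2 + K^3\sum_t \alpha_t\right).
\]
Because $1/\epsilon''^2 = e^{\mathcal{O}(1/\epsilon)}$ and every polynomial factor in $K$ and $\log K$ can be absorbed into this exponential, the total becomes
\[
e^{\mathcal{O}(1/\epsilon)}\log(m/\delta)\left(1 + \tfrac{1}{\epsilon}\sum_t \alpha_t\right).
\]
The explicit $1/\epsilon$ in front of $\sum_t \alpha_t$ is harmless here, since it is itself dominated by the exponential; I will keep it only to match the stated form.

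\emph{Main obstacle.} The delicate step is matching the normalization between the static theorem and our framework. Theorem~\ref{thm:spectral_moments} requires accuracy in the normalized moments $\frac{1}{n}\mathrm{tr}(\cdot)$, whereas our guarantees are additive in $\mathrm{tr}(\cdot)$ under $\|A_t\|_F \le 1$. I will first check that the Frobenius assumption, combined with the normalization of $\frac{1}{n}\|\tilde{\lambda} - \lambda\|_1$, makes the required additive accuracy $n$-independent. If it does not, I will rescale $A_t$ and treat $\alpha_t$ consistently under the rescaling, and confirm that the needed accuracy stays $e^{-\mathcal{O}(1/\epsilon)}$ so that everything is absorbed into the exponential prefactor.
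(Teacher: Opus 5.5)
Your proposal is essentially the paper's own proof: run $K$ independent streams through Corollary~\ref{cor:tracepower} at failure probability $\delta/K$, sum the per-stream costs to $\frac{\log(mK/\delta)}{(\epsilon')^2}\bigl(K^2+K^3\sum_{t=2}^m\alpha_t\bigr)$, and absorb all factors polynomial in $K=\mathcal{O}(1/\epsilon)$ into $e^{\mathcal{O}(1/\epsilon)}$. Your extra checks ($\|A_t\|_2\le 1$, $\|A_t^k\|_F\le 1$, and the $\frac{1}{n}$ normalization, which only relaxes the required unnormalized accuracy to $n\epsilon'$) are correct and fill in details the paper leaves implicit.

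One caveat is inherited from the paper, not introduced by you. With unknown step sizes, Theorem~\ref{thm:unknown_step_sizes} charges an extra $\sum_t k_{norm}$, here $\mathcal{O}(k\log(mK/\delta))$ MVPs per step for stream $k$, and Corollary~\ref{cor:tracepower} silently drops this term. A rigorous bound therefore needs an additive term linear in $m$. That term cannot be avoided in general, since a step that receives no queries cannot be distinguished from one in which $A_t$ jumped.
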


We can compare Theorem~\ref{thm:spectral_estimation} to the baseline of recomputing the $K$ spectral moments from scratch at every step. Estimating $K$ moments to precision $\epsilon'$ at a single step requires $\sum_{k=1}^K k \cdot \mathcal{O}(\log(K/\delta)/(\epsilon')^2) = \mathcal{O}(K^2 \log(K/\delta)/(\epsilon')^2)$ MVPs. Evaluated independently over $m$ steps, the baseline requires $\mathcal{O}(m K^2 \log(mK/\delta)/(\epsilon')^2)$ MVPs. Our dynamic method replaces the factor of $m$ with $(1 + K \sum \alpha_t)$. For matrix sequences where $\sum \alpha_t \ll m \epsilon$ (which is the case for example $\alpha$'s are of the order $\epsilon^2$), our dynamic algorithm yields an asymptotic speedup.

\section{Application: Dynamic Monte Carlo Integration}
\label{sec:monte_carlo}

We now apply our framework to efficiently maintain estimates of slowly changing integrals. 

\begin{definition}[The Dynamic Monte Carlo Integration Problem]
    Given sampling access to a fixed probability density function $p(x)$ over a domain $\Omega$, and oracle access to evaluate a sequence of evolving functions $f_1, \dots, f_m : \Omega \rightarrow \mathbb{R}$, maintain running integral estimates $\tilde{I}_1, \dots, \tilde{I}_m$ such that at every step $t$, $|\tilde{I}_t - \int_\Omega f_t(x)p(x)dx| \le \epsilon$ with probability $1-\delta$. We assume a global bound on the supremum norm $\|f_t\|_\infty \le 1$. The local known step sizes are $\alpha_t = \|f_t - f_{t-1}\|_{2,p} = \sqrt{\mathbb{E}_{x \sim p}[(f_t(x) - f_{t-1}(x))^2]}$.
\end{definition}

We must first show that the standard Monte Carlo estimator $\mathcal{E}(f, k) = \frac{1}{k}\sum_{j=1}^k f(x_j)$, where $x_1, \dots, x_k \sim p$ are drawn i.i.d., satisfies the properties of a Well-Concentrated Estimator (Definition~\ref{def:well_concentrated}) with respect to the $\|\cdot\|_{2,p}$ norm. Let $L(f) = \int_\Omega f(x)p(x)dx = \mathbb{E}_{x \sim p}[f(x)]$ denote the true integral. For a single sample $x \sim p$, let $Z = f(x) - L(f)$. The estimator error is the sample mean of $k$ independent copies of $Z$. Clearly, $\mathbb{E}[Z] = 0$. Because $\|f\|_\infty \le 1$ and by triangle inequality, we have $|Z| \le 2$. Furthermore, the variance of $Z$ is: $  \text{Var}(Z) = \mathbb{E}[Z^2] = \mathbb{E}_{x \sim p}[f(x)^2] - 2L(f)\mathbb{E}_{x \sim p}[f(x)] + L(f)^2 \le \mathbb{E}_{x \sim p}[f(x)^2]  = \|f\|_{2,p}^2$.
Bernstein's inequality \cite{bernstein1924modification, vershynin2018high} states that for independent, zero-mean random variables $Z_1, \dots, Z_k$ bounded by $c$, the mean $\bar{Z}$ satisfies $  \mathbb{P}(|\bar{Z}| > t) \le 2 \exp\left( - \frac{k t^2}{2\sigma^2 + 2ct/3} \right)$ where $\sigma^2$ is the variance of $Z$. Plugging in our bound $\sigma^2 \le \|f\|_{2,p}^2$ and $c = 2$, we obtain $ \mathbb{P}(|\mathcal{E}(f, k) - L(f)| > t) \le 2 \exp\left( - \frac{k t^2}{2\|f\|_{2,p}^2 + 4t/3} \right)$. This matches the sub-exponential concentration requirement of our framework. Specifically, the estimator's variance parameter is bounded by $\nu \le \frac{\|f\|_{2,p}}{\sqrt{k}}$ and scale parameter by $\beta \le \frac{2}{3k}$. That is, it is a Well Concentrated Estimator with $c_1 = 1, c_2 = 0$, and $c_3 = 2/3$. We obtain the following performance guarantee:\looseness-1

\begin{theorem}[Dynamic Monte Carlo Integration]\label{thm:montecarlo}
Let $\epsilon, \delta \in (0,1)$. Given sample access to $p(x)$ and a sequence of bounded functions $f_1, \dots, f_m$ with known local step sizes $\alpha_t = \|f_t - f_{t-1}\|_{2,p}$, the Adaptive Algorithm guarantees that with probability at least $1-\delta$, the integral estimates satisfy $|\tilde{I}_t - \int_\Omega f_t(x)p(x)dx| \le \epsilon$ simultaneously for all $t \in [m]$. The total number of function evaluations required across the sequence is bounded by: $  \mathcal{O}\left( \frac{\log(m/\delta)}{\epsilon^2} \left( 1 + \sum_{t=2}^m \alpha_t \right) + \frac{m \log(m/\delta)}{\epsilon} \right)$.
\end{theorem}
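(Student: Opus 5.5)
The plan is to obtain Theorem~\ref{thm:montecarlo} as a direct instance of Theorem~\ref{thm:main,knownstep}. The framework is instantiated as follows:
\begin{itemize}
\item the vector space is $V$, the bounded measurable functions on $\Omega$ equipped with $\|\cdot\|_{2,p}$;
\item the target space is $W=\mathbb{R}$ with the absolute value norm;
\item the linear map is $L(f)=\mathbb{E}_{x\sim p}[f(x)]$;
\item the estimator $\mathcal{E}(f,k)$ is the empirical mean of $k$ i.i.d.\ evaluations.
\end{itemize}
Linearity of $L$ and unbiasedness of $\mathcal{E}$ are immediate. The Bernstein computation preceding the statement shows that $\mathcal{E}$ is well-concentrated (Definition~\ref{def:well_concentrated}) with $c_1=1$, $c_2=0$, $c_3=2/3$. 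The norm hypothesis holds because $\|f_t\|_{2,p}\le\|f_t\|_\infty\le 1$. Algorithm~\ref{alg:adaptive_algo} then runs with the known $\alpha_t=\|f_t-f_{t-1}\|_{2,p}$. The residual $u_t=f_t-(1-\gamma_t)f_{t-1}$ is a function we can evaluate pointwise, using two oracle calls per sample, so the number of function evaluations is at most twice the sample complexity.

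The one point that needs care is that the Bernstein constant $c=2$ was derived from $\|f\|_\infty\le 1$. The residual $u_t$ is not normalized in this way, but $\|u_t\|_\infty\le \|f_t\|_\infty+\|f_{t-1}\|_\infty\le 2$. So every centered single-sample term satisfies $|Z|\le 4$, and Bernstein still gives $\nu\le\|u_t\|_{2,p}/\sqrt{k}$ with a scale parameter $\beta\le c_3'/k$ for an absolute constant $c_3'$.

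I expect the main obstacle to be justifying that the algorithm's inductive recursion $\nu_t^2=(1-\gamma_t)^2\nu_{t-1}^2+\nu_{\mathrm{fresh}}^2$ still applies here. This holds because the fresh samples at step $t$ are independent of all past ones, so Lemma~\ref{lem:linear_combinations} applies to the scalar errors. I will also convert the Bernstein tail bound into the mgf form required by the sub-exponential definition. For bounded centered variables this is the standard mgf version of Bernstein, $\log\mathbb{E}e^{\lambda Z}\le \lambda^2\sigma^2/(2(1-c|\lambda|/3))$. That bound gives $\nu^2\le 2\sigma^2$ and $\beta\le 2c/3$ for a single sample, and averaging over $k$ samples produces the stated $1/\sqrt{k}$ and $1/k$ scalings. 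Only the constants change.

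Substituting $c_1=O(1)$, $c_2=0$, $c_3=O(1)$ into the bound of Theorem~\ref{thm:main,knownstep} yields
\[
O\!\left(\frac{\log(m/\delta)}{\epsilon^2}\Bigl(1+\sum_{t=2}^m\alpha_t\Bigr)+\frac{m\log(m/\delta)}{\epsilon}\right),
\]
together with the guarantee $|\tilde I_t-L(f_t)|\le\epsilon$ for all $t$ simultaneously with probability $1-\delta$. This completes the argument.
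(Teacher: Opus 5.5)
Your proposal is correct and follows the paper's route: you verify via Bernstein that the Monte Carlo empirical mean is a well-concentrated estimator under $\|\cdot\|_{2,p}$ with $c_1=O(1)$, $c_2=0$, $c_3=O(1)$, and then plug these constants into Theorem~\ref{thm:main,knownstep}. Your additional care fills in details the paper's short derivation leaves implicit, and it affects only constants. That care consists of bounding $\|u_t\|_\infty\le 2$ for the residuals (the paper's Bernstein step assumed $\|f\|_\infty\le 1$), using the mgf form of Bernstein so that Lemma~\ref{lem:linear_combinations} applies, and counting two oracle calls per residual sample.
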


\subsection{Application: Dynamic Dirichlet Problem}
\label{sec:dirichlet}

A classical problem in mathematical physics is the Dirichlet problem for Laplace's equation~\cite{alskog2023history, courant1953methods, krantz1999handbook}. Given a domain $\Omega \subset \mathbb{R}^d$ and a continuous boundary function $g: \partial \Omega \to \mathbb{R}$, the goal is to find a smooth interpolation $u(x)$ into the interior such that $\Delta u = 0$ on $\Omega$ and $u = g$ on $\partial \Omega$. Here, $\Delta$ is the standard Laplacian operator, that is $\Delta u = \nabla \cdot \nabla u$.

By Kakutani's theorem~\cite{kakutani}, the smooth interpolation solution $u(x)$ evaluated at a specific interior point $x$ can be expressed as an integral of the boundary values $g$ with respect to the harmonic measure $P(x, \cdot)$: $ u(x) = \int_{\partial \Omega} g(y) P(x, dy).$
This harmonic measure represents the probability distribution of the location where a random Brownian motion, originating at $x$, first intersects the boundary $\partial \Omega$. Because $u(x)$ is formulated as an integral over a probability measure, we can estimate it using Monte Carlo integration. Specifically, we can obtain an unbiased estimate of $u(x)$ by simulating multiple random walks starting from $x$ and averaging the values of $g$ at their respective boundary exit points. In practice, the sampling step can be implemented via the \emph{Walk on Spheres} (WoS) method~\cite{wos}. Although that algorithm's cost is usually dominated by the walk itself, our result may be useful in settings where evaluating the boundary function is the true bottleneck, such as when querying it involves inference of a neural network or solving a different PDE.

We consider the following dynamic setting. Suppose the boundary condition is changing slowly over time, yielding a sequence of boundary functions $g_1, g_2, \dots, g_m$. This corresponds to a dynamic Dirichlet problem where we wish to track the evolving solution $u_t(x)$ at a fixed evaluation point. As a direct application of Theorem~\ref{thm:montecarlo}, we get Corollary \ref{cor:dirichlet}.

\begin{definition}[The Dynamic Dirichlet Problem]
    Given a domain $\Omega \subset \mathbb{R}^d$, a fixed evaluation point $x \in \Omega$, and a sequence of evolving continuous boundary functions $g_1, \dots, g_m : \partial \Omega \rightarrow \mathbb{R}$, maintain running solution estimates $\tilde{u}_1(x), \dots, \tilde{u}_m(x)$ such that at every step $t$, $|\tilde{u}_t(x) - u_t(x)| \le \epsilon$ with high probability. We assume bounded supremum norms $\|g_t\|_\infty \le 1$. The local step sizes are given by the $L_2$ difference over the harmonic measure: $\alpha_t = \|g_t - g_{t-1}\|_{2, P(x, \cdot)}$, and are assumed to be known.
\end{definition}

\begin{corollary}[Dynamic Dirichlet Evaluation]\label{cor:dirichlet}
Let $\epsilon, \delta \in (0,1)$. Given a sequence of bounded boundary functions $g_1, \dots, g_m$ with known step sizes $\alpha_t = \|g_t - g_{t-1}\|_{2, P(x, \cdot)}$, the Adaptive Algorithm guarantees that with probability at least $1-\delta$, the PDE estimates at point $x$ satisfy $|\tilde{u}_t(x) - u_t(x)| \le \epsilon$ simultaneously for all $t \in [m]$. The total number of boundary samples required across the sequence is bounded by: $  \mathcal{O}\left( \frac{\log(m/\delta)}{\epsilon^2}\left(1 + \sum_{t=2}^m \alpha_t\right) + \frac{m \log(m/\delta)}{\epsilon} \right)$.
\end{corollary}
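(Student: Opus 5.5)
The plan is to obtain Corollary~\ref{cor:dirichlet} as an instance of Theorem~\ref{thm:montecarlo}: take the fixed probability measure to be the harmonic measure $p = P(x,\cdot)$ on $\Omega' = \partial\Omega$, and the evolving functions to be $f_t = g_t$. By Kakutani's theorem, the target quantity is exactly the integral in the Monte Carlo problem:
\[
u_t(x) = \int_{\partial\Omega} g_t(y)\,P(x,dy) = L(g_t).
\]
The hypotheses of Theorem~\ref{thm:montecarlo} then need to be matched one by one. First, sampling access to $p$ is supplied by simulating a Brownian motion from $x$ (e.g.\ via Walk on Spheres) and recording its exit point. Second, oracle access to $f_t$ is evaluation of $g_t$ at that point. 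Third, $\|f_t\|_\infty = \|g_t\|_\infty \le 1$. Fourth, the step sizes $\alpha_t = \|g_t - g_{t-1}\|_{2,P(x,\cdot)}$ coincide with the $\|\cdot\|_{2,p}$ step sizes of the Monte Carlo problem and are assumed known.

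Next I will verify that the Monte Carlo estimator $\mathcal{E}(g,k) = \frac1k\sum_{j} g(y_j)$, with $y_j \sim P(x,\cdot)$ i.i.d., is well-concentrated in the sense of Definition~\ref{def:well_concentrated} with $c_1 = 1$, $c_2 = 0$, $c_3 = 2/3$. The Bernstein argument preceding Theorem~\ref{thm:montecarlo} used only boundedness and the variance bound $\mathrm{Var}(g(y)) \le \|g\|_{2,p}^2$, and is insensitive to what $p$ is, so it applies verbatim.

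One point needs care, and I expect it to be the main obstacle. The residual states $u_t = g_t - (1-\gamma_t)g_{t-1}$ that Algorithm~\ref{alg:adaptive_algo} feeds to the estimator have $\|u_t\|_\infty \le 2$ rather than $1$. The fix is to note that Bernstein with bound $c = 4$ only changes $c_3$ by a constant factor, which is absorbed in the $\mathcal{O}$. Equivalently, the Monte Carlo theorem already faces the same issue, so its conclusion transfers unchanged. The other subtlety is that the samples must be exactly distributed as harmonic measure; I will treat the walk as an exact sampler, consistent with counting only boundary samples.

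Finally, invoking Theorem~\ref{thm:montecarlo}, or equivalently Theorem~\ref{thm:main,knownstep} with these constants, yields correctness with probability $1-\delta$ simultaneously for all $t$. The count of boundary samples is
\[
\mathcal{O}\!\left(\frac{\log(m/\delta)}{\epsilon^2}\Bigl(1+\sum_{t=2}^m\alpha_t\Bigr) + \frac{m\log(m/\delta)}{\epsilon}\right),
\]
since each estimator sample uses exactly one evaluation of the boundary function.
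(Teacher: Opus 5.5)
Your proposal is correct and matches the paper, which obtains Corollary~\ref{cor:dirichlet} as a direct instance of Theorem~\ref{thm:montecarlo}: take $p = P(x,\cdot)$ via Kakutani's theorem and $f_t = g_t$. Your extra remark is also right and is a detail the paper leaves implicit: the residuals $u_t = g_t - (1-\gamma_t)g_{t-1}$ only satisfy $\|u_t\|_\infty \le 2$, so the Bernstein scale constant $c_3$ grows by a constant factor, which does not change the stated bound.
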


\begin{figure}[!t]
    \centering
  \centering
        \includegraphics[width=0.6\linewidth]{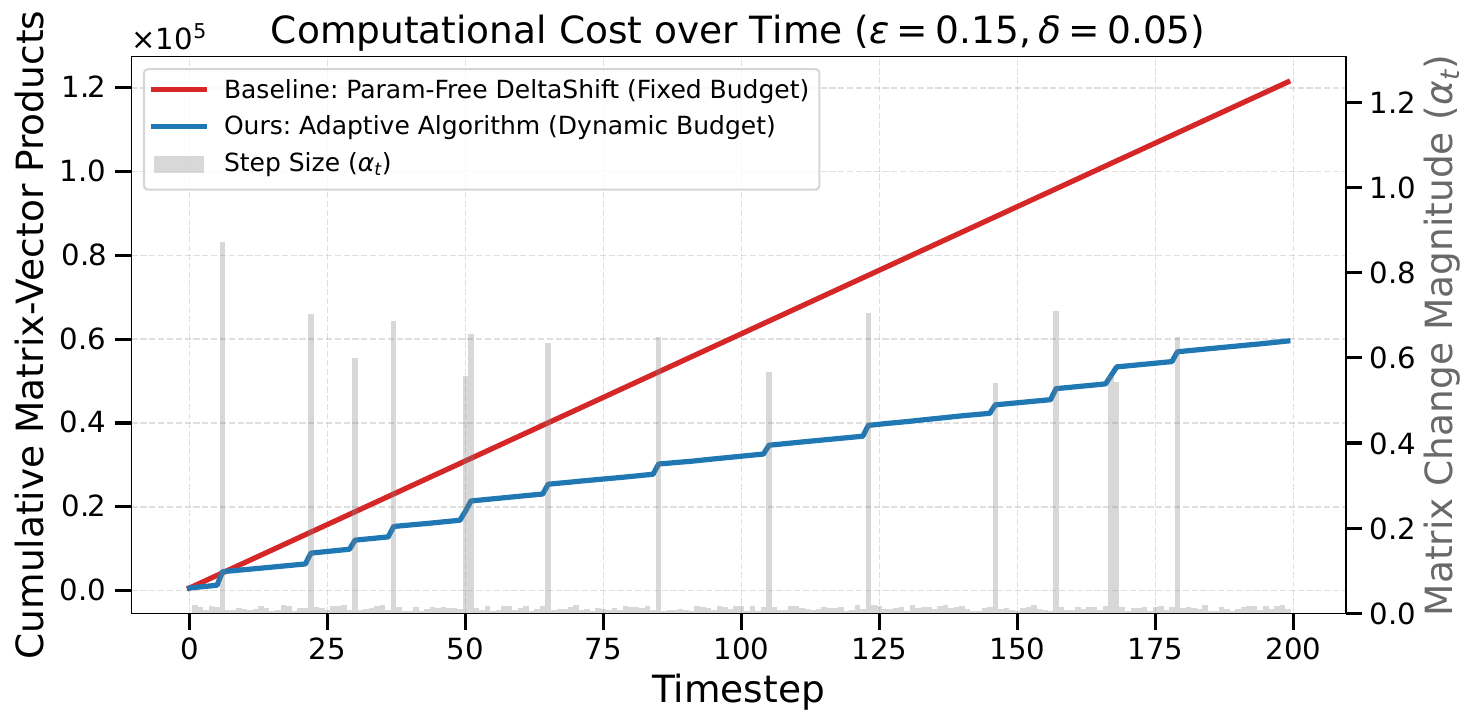}
        \caption{ Our adaptive method requires fewer samples to estimate traces of a sequence of matrices. }
        \label{fig:cost_vs_time}
\end{figure}

\section{Experiments}
\label{sec:experiments}

We empirically evaluate our proposed adaptive algorithm. We compare it against the parameter-free DeltaShift baseline~\citep{dynamictrace}, the current state-of-the-art for dynamic trace estimation for Frobenius norms.

\textbf{Comparing Cost across a sequence and Ablations.}
In our first experiment, we generate a synthetic sequence of $2000 \times 2000$ matrices over $m=200$ timesteps. We introduce rare, large perturbations to simulate a stable sequence punctuated by bursts. As shown in Figure~\ref{fig:cost_vs_time}, the baseline method decides its (fixed) query budget based on the worst-case step size ($\alpha_{\max}$). On the other hand, our adaptive algorithm scales down its query budget during stable periods, resulting in a significantly lower total matrix-vector product (MVP) cost. Furthermore, Figure~\ref{fig:ablations} confirms that this cost reduction holds across a wide range of target error tolerances ($\epsilon$) and failure probabilities ($\delta$).

\textbf{Cost vs. Empirical Error Pareto Frontier.}
In our second experiment, we evaluate the trade-off between total computational cost (MVPs) and empirical error. To generate a single point on the trade-off graph, we first run our adaptive method for a specific target error tolerance, record its total budget, and then evaluate the baseline by dividing that exact budget evenly across all timesteps (as the baseline cannot adjust adaptively). We repeat this procedure across varying error tolerances to construct the Pareto plots. Figure~\ref{fig:pareto_frontier} shows the results for three metrics: Maximum Absolute Error, Mean Absolute Error, and Weighted Mean Error (which penalizes errors heavily during large matrix jumps). Across all three metrics, our method performs better.

\begin{figure}[!t]
    \centering
    \includegraphics[width=0.9\linewidth]{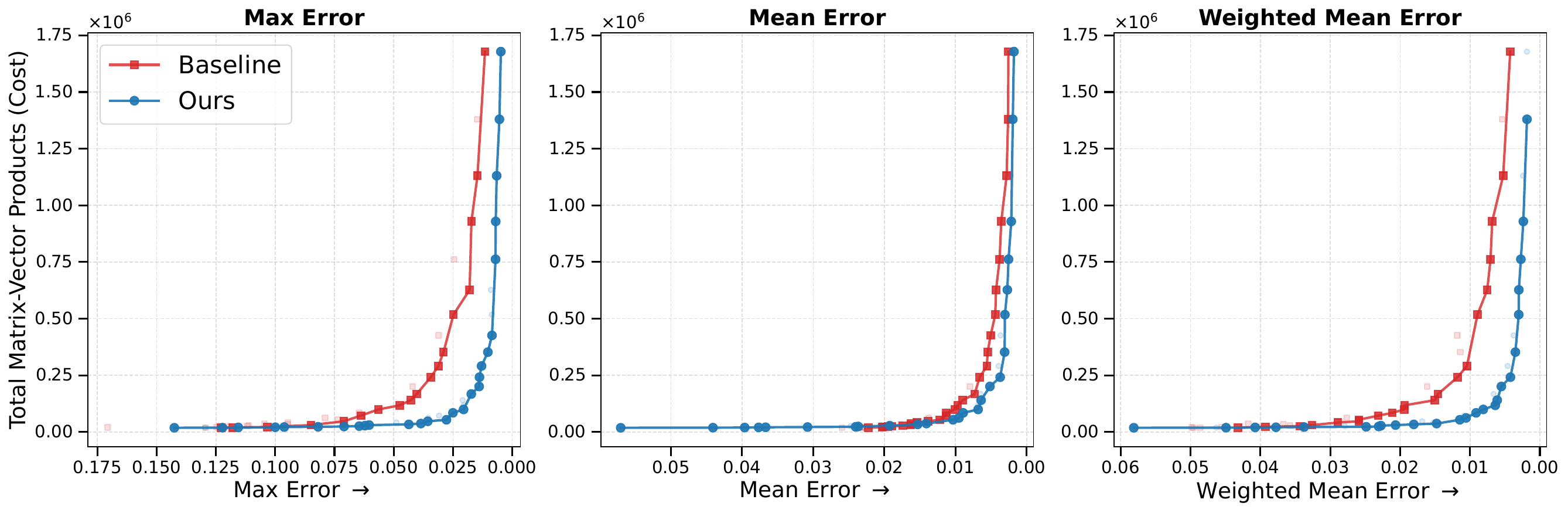}
    \caption{\textbf{Cost vs. Empirical Error Pareto Frontier.} At equivalent computational budgets, the adaptive algorithm achieves strictly lower errors across all three evaluation metrics.}
    \label{fig:pareto_frontier}
\end{figure}

\textbf{Hessian Trace Tracking During Neural Network Training.}
In our final experiment, we evaluate dynamic Hessian trace estimation on an actual neural network training trajectory. We train a $2,410$-parameter Multilayer Perceptron (MLP) on a classification task for 200 steps with SGDR (Stochastic Gradient Descent with Warm Restarts)~\cite{loshchilov2017sgdr} learning rate schedule with a period of $T_0=50$. Matrix-vector products are approximated by backpropagation using Pearlmutter's trick~\cite{pearlmutter1994fast}. 

\begin{figure}[!t]
    \centering
    \includegraphics[width=0.85\linewidth]{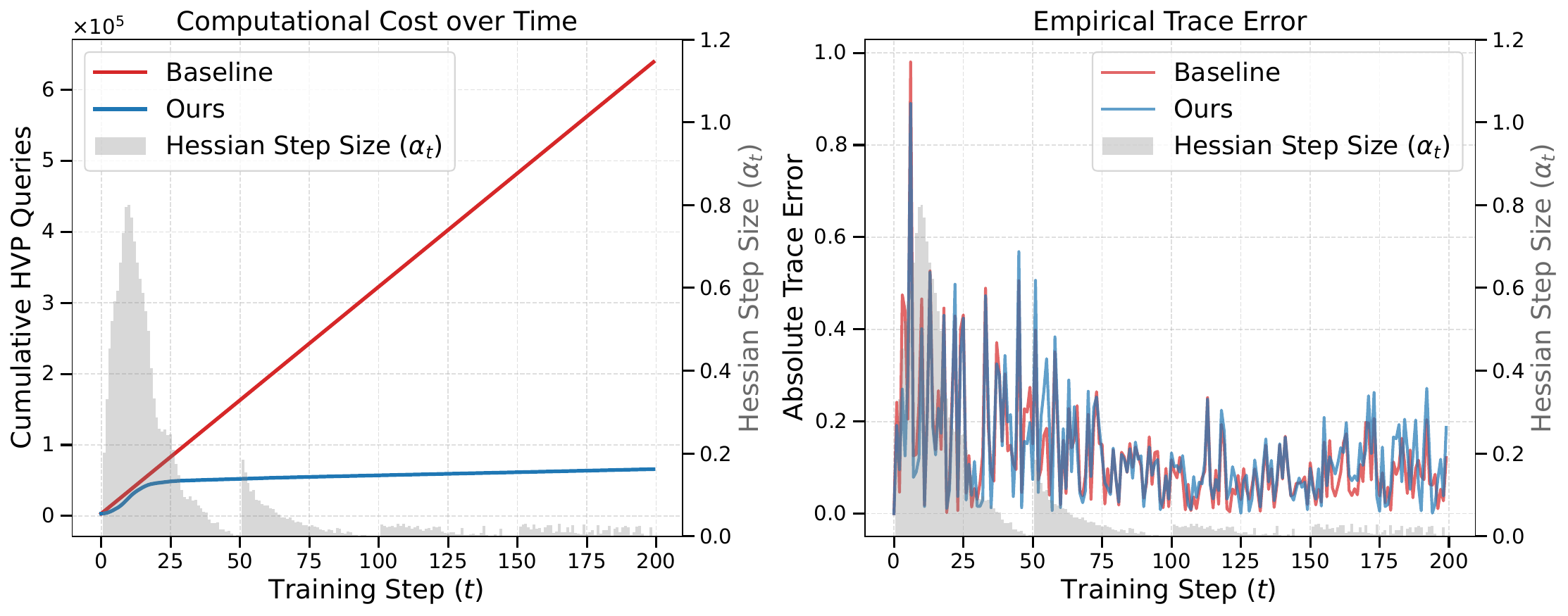}
    \caption{\textbf{Hessian Trace Estimation during SGDR.} (Left) Cumulative queries over time. The shaded background shows the Hessian step size ($\alpha_t$), which spikes heavily during the 50-step SGDR restarts. (Right) Empirical trace error. This shows that while both algorithms roughly maintain the same error, our adaptive method is more efficient since it can dynamically adjust the budget.}
    \label{fig:hessian_trace}
\end{figure}

\begin{figure}[!t]
 \includegraphics[width=\linewidth]{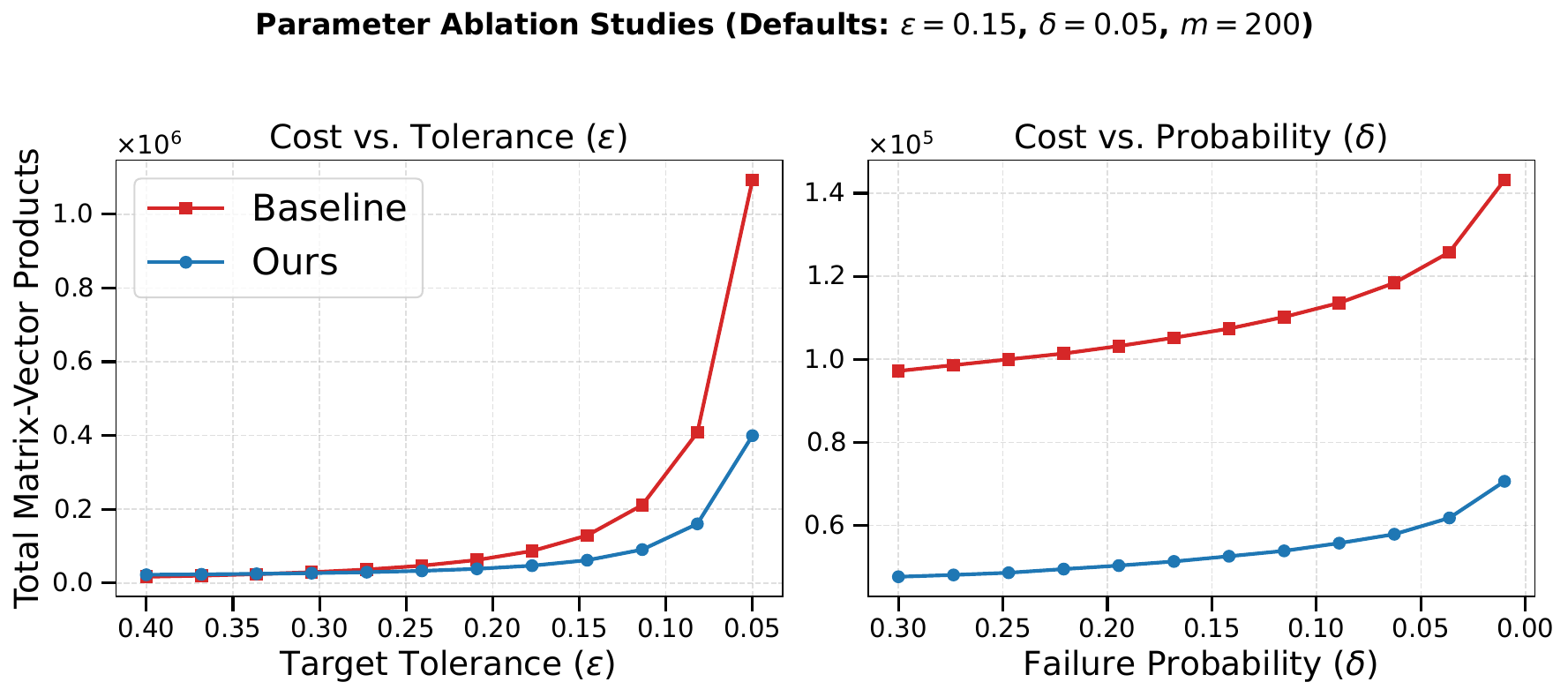}
        \caption{\textbf{Ablations.} Our method performs better the baseline across varying target tolerances ($\epsilon$) and failure probabilities ($\delta$).}
        \label{fig:ablations}
\end{figure}

\clearpage
\bibliographystyle{plain}
\bibliography{ref}
\appendix

\newpage
\section{Additional Preliminaries}
\label{sec:subexp_preliminaries}

\begin{definition}[Sub-exponential scalar variables and norm-concentrated vector errors]
\label{def:sub_exponential}
A scalar random variable $X$ with mean $\mu$ is sub-exponential with variance parameter $\nu^2 \ge 0$ and scale parameter $\beta \ge 0$ if
\[
    \mathbb{E}\!\left[e^{\lambda (X-\mu)}\right] \le e^{\lambda^2\nu^2/2}
    \quad \text{for all } |\lambda| < 1/\beta.
\]
For a $W$-valued estimator error $e$, we say that $e$ has sub-exponential norm concentration with parameters $(\nu^2,\beta)$ if, for every $\epsilon>0$,
\[
    \mathbb{P}[\|e\|_W \ge \epsilon]
    \le 2 \exp\left( - \min\left( \frac{\epsilon^2}{2\nu^2}, \frac{\epsilon}{2\beta} \right) \right).
\]
This is the vector-valued concentration property used in our guarantees.
\end{definition}

We use the following two standard facts about scalar sub-exponential random variables. For vector-valued errors, Definition~\ref{def:well_concentrated} assumes the corresponding norm concentration directly.

\begin{lemma}[Sub-exponential Concentration]
\label{lem:tail_bound}
Let $X$ be a sub-exponential random variable with mean $\mu$, variance parameter $\nu^2$, and scale parameter $\beta$. Then for any $\epsilon > 0$:
\begin{equation}
    \mathbb{P}[|X - \mu| \ge \epsilon] \le 2 \exp\left( - \min\left( \frac{\epsilon^2}{2\nu^2}, \frac{\epsilon}{2\beta} \right) \right)
\end{equation}
The same sufficient parameter thresholds apply to a vector-valued error $e$ whenever it has sub-exponential norm concentration as in Definition~\ref{def:sub_exponential}. Consequently, to guarantee that the error exceeds $\epsilon$ with probability at most $\delta$, it is sufficient to ensure the parameters satisfy $\nu^2 \le \frac{\epsilon^2}{2 \log(2/\delta)}$ and $\beta \le \frac{\epsilon}{2 \log(2/\delta)}$.
\end{lemma}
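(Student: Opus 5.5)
This is a standard tail bound, so the plan is a Chernoff argument on each side, followed by solving for the sufficient parameter thresholds.

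\textbf{Tail bound.} Let $Y = X-\mu$ and fix $\epsilon>0$. For any $\lambda\in[0,1/\beta)$, Markov's inequality applied to $e^{\lambda Y}$ together with the defining moment-generating-function bound of Definition~\ref{def:sub_exponential} gives
\[
\mathbb{P}[Y\ge\epsilon]\le \exp\!\left(-\lambda\epsilon+\lambda^2\nu^2/2\right).
\]
The unconstrained minimizer of the exponent is $\lambda^*=\epsilon/\nu^2$, and I split into two cases according to whether it is feasible.
\begin{itemize}
\item If $\epsilon\le \nu^2/\beta$, then $\lambda^*$ is admissible (up to the open endpoint, handled by a limit $\lambda\uparrow 1/\beta$). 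The exponent becomes $-\epsilon^2/(2\nu^2)$.
\item If $\epsilon>\nu^2/\beta$, take $\lambda\uparrow 1/\beta$. The exponent becomes $-\epsilon/\beta+\nu^2/(2\beta^2)$. Since $\nu^2<\epsilon\beta$, this is at most $-\epsilon/\beta+\epsilon/(2\beta)=-\epsilon/(2\beta)$.
\end{itemize}
In either case $\mathbb{P}[Y\ge\epsilon]\le\exp(-\min(\epsilon^2/(2\nu^2),\epsilon/(2\beta)))$. The same argument applied to $-Y$, using $\lambda\in(-1/\beta,0]$, bounds the lower tail. A union bound over the two tails then yields the factor $2$. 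The degenerate cases $\nu=0$ or $\beta=0$ follow by interpreting $x/0=\infty$ and taking limits.

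\textbf{Vector-valued case.} For $W$-valued errors, the displayed bound is exactly the assumption in Definition~\ref{def:sub_exponential}, so there is nothing to prove.

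\textbf{Sufficient thresholds.} Suppose $\nu^2\le \epsilon^2/(2\log(2/\delta))$ and $\beta\le\epsilon/(2\log(2/\delta))$. Then $\epsilon^2/(2\nu^2)\ge\log(2/\delta)$ and $\epsilon/(2\beta)\ge\log(2/\delta)$, so the minimum is at least $\log(2/\delta)$. Hence the tail probability is at most $2e^{-\log(2/\delta)}=\delta$.

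\textbf{Main obstacle.} The only delicate step is the boundary case, where $\lambda$ must stay strictly below $1/\beta$. I handle it by taking the limit, since the bound holds for every admissible $\lambda$ and the right-hand side is continuous in $\lambda$.
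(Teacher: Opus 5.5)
Your proof is correct. The paper gives no proof of this lemma and simply invokes it as a standard fact; your argument is the textbook proof it relies on: a Chernoff bound using the optimizer $\lambda^*=\epsilon/\nu^2$ when $\epsilon\le\nu^2/\beta$ and $\lambda\uparrow 1/\beta$ otherwise, then a union bound over both tails, with the vector-valued clause immediate from Definition~\ref{def:sub_exponential} and the threshold computation exactly as you give it.
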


\begin{lemma}[Linear Combinations of Sub-exponentials]
\label{lem:linear_combinations}
Linear combinations of independent sub-exponential variables remain sub-exponential. Let $X_1, \dots, X_k$ be independent sub-exponential random variables, where each $X_i$ has parameters $(\nu_i^2, \beta_i)$. For any constants $a_i \in \mathbb{R}$, the sum $Y = \sum_{i=1}^k a_i X_i$ is sub-exponential with parameters:
\begin{equation}
    \nu_*^2 = \sum_{i=1}^k a_i^2 \nu_i^2 \quad \text{and} \quad \beta_* = \max_i |a_i| \beta_i
\end{equation}
\end{lemma}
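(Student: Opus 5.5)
The plan is to work directly with the moment generating function in Definition~\ref{def:sub_exponential}, centering each variable and using independence to factor the MGF of the sum into a product of individual MGFs. Let $\mu_i = \mathbb{E}[X_i]$, so that $\mathbb{E}[Y] = \sum_i a_i \mu_i$ and
\[
Y - \mathbb{E}[Y] = \sum_{i=1}^k a_i (X_i - \mu_i).
\]
Fix $\lambda$ with $|\lambda| < 1/\beta_*$, where $\beta_* = \max_i |a_i|\beta_i$, using the convention $1/0 = \infty$.

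\textbf{Step 1: factor the MGF.} By independence of the $X_i$, the random variables $e^{\lambda a_i (X_i-\mu_i)}$ are independent. Hence
\[
\mathbb{E}\bigl[e^{\lambda(Y-\mathbb{E}[Y])}\bigr] = \prod_{i=1}^k \mathbb{E}\bigl[e^{(\lambda a_i)(X_i-\mu_i)}\bigr].
\]

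\textbf{Step 2: bound each factor.} I apply the sub-exponential hypothesis for $X_i$ at the parameter $\lambda_i := \lambda a_i$. The hypothesis requires $|\lambda_i| < 1/\beta_i$, and I check this in three cases.
\begin{itemize}
\item If $a_i = 0$, the factor equals $1 = e^{0}$, so the bound holds trivially.
\item If $\beta_i = 0$, the hypothesis holds for every $\lambda_i$, so there is nothing to check.
\item Otherwise $|a_i|\beta_i \le \beta_*$, which gives $|\lambda_i|\,\beta_i = |\lambda|\,|a_i|\beta_i \le |\lambda|\,\beta_* < 1$.
\end{itemize}
In every case, therefore,
\[
\mathbb{E}\bigl[e^{\lambda_i(X_i-\mu_i)}\bigr] \le e^{\lambda^2 a_i^2 \nu_i^2/2}.
\]

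\textbf{Step 3: multiply the bounds.} Taking the product over $i$ gives
\[
\mathbb{E}\bigl[e^{\lambda(Y-\mathbb{E}[Y])}\bigr] \le \exp\Bigl(\tfrac{\lambda^2}{2}\sum_i a_i^2\nu_i^2\Bigr) = e^{\lambda^2\nu_*^2/2},
\]
valid for all $|\lambda| < 1/\beta_*$. This is exactly the sub-exponential condition with parameters $(\nu_*^2, \beta_*)$.

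\textbf{Main obstacle.} The only delicate point is the range check in Step 2. One must make sure the admissible interval for $\lambda$ in the sum is contained in the rescaled admissible interval of every summand. This includes the degenerate cases $a_i = 0$ and $\beta_i = 0$, as well as $\beta_* = 0$, where the condition is then required for all $\lambda \in \mathbb{R}$. Choosing $\beta_*$ as the maximum of the scaled $|a_i|\beta_i$ is precisely what makes this containment hold. The rest of the argument is just independence and the additivity of exponents.
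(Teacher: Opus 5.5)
Your argument is correct: factoring the centered MGF of $Y$ by independence and checking that $|\lambda| < 1/\beta_*$ forces $|\lambda a_i| < 1/\beta_i$ for every summand (with the degenerate cases $a_i = 0$, $\beta_i = 0$, $\beta_* = 0$ handled explicitly) is the standard proof. The paper gives no proof of its own and simply cites this as a standard fact, so your write-up supplies the argument it implicitly relies on.
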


\begin{theorem}[\cite{cohen2017approximating}]
\label{thm:spectral_moments}
To recover an $\epsilon$-approximate spectrum of a matrix $A$ in Wasserstein distance, it suffices to compute its first $K = \mathcal{O}(1/\epsilon)$ spectral moments, given by $\text{tr}(A^k)$ for $k \in \{1, \dots, K\}$, each to an additive error bounded by $\epsilon' = \exp(-\Omega(1/\epsilon))$.
\end{theorem}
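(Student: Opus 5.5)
Theorem~\ref{thm:spectral_moments} is taken from \cite{cohen2017approximating}, so my plan is to reconstruct its standard argument rather than derive it from the dynamic framework. The argument has three stages: a polynomial approximation step, a Chebyshev moment step, and a recovery step.

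\textbf{Reduction to polynomials.} Write the spectral density as the empirical measure $s = \frac{1}{n}\sum_i \delta_{\lambda_i}$ on $[-1,1]$. By Kantorovich--Rubinstein duality, $W_1(s,\tilde s) = \sup_{f\ 1\text{-Lipschitz}} \int f\,d(s-\tilde s)$. Jackson's theorem gives, for every $1$-Lipschitz $f$ on $[-1,1]$, a degree-$K$ polynomial $p$ with $\|f-p\|_\infty = O(1/K)$. Choosing $K = \Theta(1/\epsilon)$ therefore controls the approximation part by $O(\epsilon)$. What remains is to ensure $\int p\, d(s-\tilde s)$ is small for every such $p$.

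\textbf{Chebyshev moments.} Expand $p = \sum_{k\le K} c_k T_k$ in the Chebyshev basis. Jackson's construction (a damped Chebyshev expansion) keeps the coefficients bounded, $|c_k| = O(1)$ or even $O(1/k)$. The Chebyshev moments satisfy $\frac1n\mathrm{tr}(T_k(A)) = \sum_{j\le k} a_{kj}\,\frac1n\mathrm{tr}(A^j)$, with $\sum_j |a_{kj}| \le (1+\sqrt2)^k = e^{O(k)}$. Hence additive error $\epsilon'$ on each monomial moment yields Chebyshev moment error $e^{O(K)}\epsilon'$. Taking $\epsilon' = e^{-\Omega(1/\epsilon)}$ with a large enough constant makes the total contribution $K\cdot e^{O(K)}\epsilon' \le \epsilon$. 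I will use the normalized trace here, which matches the $\frac1n\|\cdot\|_1$ convention.

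\textbf{Recovery.} From the approximate moments, solve a linear program over distributions $\tilde s$ supported on a fine grid of $[-1,1]$ whose Chebyshev moments match the estimates up to tolerance. The true $s$, after rounding to the grid, is feasible, and any feasible $\tilde s$ is $O(\epsilon)$-close to $s$ by the duality argument above. Outputting $n$ quantiles of $\tilde s$ gives $\tilde\lambda$ with $\frac1n\|\tilde\lambda-\lambda\|_1 = W_1 \le O(\epsilon)$, since $W_1$ on the line is achieved by sorted matching. Rescaling $\epsilon$ finishes the argument.

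\textbf{Main obstacle.} The delicate step is the coefficient bookkeeping: the monomial-to-Chebyshev conversion blows up exponentially. This is exactly why monomial moments need accuracy $e^{-\Omega(1/\epsilon)}$ instead of polynomial accuracy. I would first try to bound it crudely through the binomial expansion of $T_k$ as above.
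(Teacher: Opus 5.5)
The paper does not prove this statement; it imports it verbatim from \cite{cohen2017approximating}. Your reconstruction is essentially the argument of that reference and is correct: Jackson approximation of $1$-Lipschitz test functions with bounded Chebyshev coefficients, then the $(1+\sqrt{2})^k$ Chebyshev-to-monomial blowup that forces $\epsilon' = e^{-\Omega(1/\epsilon)}$, then recovery of a feasible measure $\tilde s$ via a grid LP.

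The one step to state more carefully is the final rounding of $\tilde s$ to $n$ values. Take the bin medians $\tilde\lambda_i = F_{\tilde s}^{-1}\bigl((i-\tfrac12)/n\bigr)$; these give the $W_1$-closest uniform $n$-atom measure to $\tilde s$. Since $s$ is itself such a measure, this yields $\frac1n\|\tilde\lambda-\lambda\|_1 \le 2W_1(s,\tilde s)$. An arbitrary choice of quantiles can incur an extra additive error of order $1/n$, which is not $O(\epsilon)$ when $n$ is small.
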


\section{Omitted Proofs}

\subsection{Proof of Theorem \ref{thm:unknown_step_sizes}}\label{appendix:thm2}

\begin{proof}[Proof of Theorem \ref{thm:unknown_step_sizes}]
By a union bound over the oracle queries, with probability at least $1-\delta$, the condition $\tilde{\alpha}_t \ge \alpha_t$ holds simultaneously for all $t \in [m]$. Conditioned on this, the sample budget allocated in the second phase satisfies $k_t(\tilde{\alpha}_t) \ge k_t(\alpha_t)$. Since the budget monotonically increases with step size, drawing $k_t(\tilde{\alpha}_t)$ samples is sufficient to ensure the sub-exponential parameters satisfy $\nu_t^2 \le N$ and $\beta_t \le B$, and thus the inductive proof. Applying Theorem~\ref{thm:main,knownstep} with proxy step sizes $\tilde{\alpha}_t$ adds a failure probability of at most $\delta$, thus the total failure probability is at most $2\delta$. Adding the Phase 1 additive sample overhead of $\sum k_{norm}$ yields the final bound.
\end{proof}

\subsection{Proof of Theorem \ref{thm:spectral_estimation}}\label{sec:spectral_density_proof}

\begin{proof}[Proof of Theorem \ref{thm:spectral_estimation}]
We apply the bound from Corollary~\ref{cor:tracepower} (Dynamic Trace of Matrix Powers). Summing the required MVPs across all $K$ instances, we obtain: 
\[ \sum_{k=1}^K \mathcal{O}\left( \frac{\log(mK/\delta)}{(\epsilon')^2} \left(k + k^2 \sum_{t=2}^m \alpha_t\right) \right) = \mathcal{O}\left( \frac{\log(mK/\delta)}{(\epsilon')^2} \left( \sum_{k=1}^K k + \left(\sum_{k=1}^K k^2\right) \sum_{t=2}^m \alpha_t \right) \right) \]
 which simplifies to
\begin{align*}
    \sum_{k=1}^K \mathcal{O}\left( \frac{\log(mK/\delta)}{(\epsilon')^2} \left(k + k^2 \sum_{t=2}^m \alpha_t\right) \right) &= \mathcal{O}\left( \frac{\log(mK/\delta)}{(\epsilon')^2} \left( \sum_{k=1}^K k + \left(\sum_{k=1}^K k^2\right) \sum_{t=2}^m \alpha_t \right) \right) \\
    &= \mathcal{O}\left( \frac{\log(mK/\delta)}{(\epsilon')^2} \left( K^2 + K^3 \sum_{t=2}^m \alpha_t \right) \right),
\end{align*}
as desired.
\end{proof}



\newpage

\end{document}